\titleclass{\task}{straight}[\section]
\newcounter{task}
\renewcommand{\thetask}{\arabic{task}}
\titleformat{\task}[hang]
    {\normalfont\LARGE\bfseries}{Task \thetask:}{1em}{}
\titleformat*{\task}{\color{header1}\bfseries}
\titlespacing*{\section}{0ex}{1ex}{1ex}
\titlespacing*{\subsection}{0ex}{1ex}{1ex}
\titlespacing*{\paragraph}{0ex}{1ex}{1ex}
\titlespacing*{\subparagraph}{0pt}{1ex}{1ex}
\titlespacing*{\task}{0em}{1ex}{1ex}
\newtheorem{lem}{Lemma}
\newcommand{\argmax}{\operatornamewithlimits{argmax}}
\newcommand{\T}{^{\ensuremath{\mathsf{T}}}}           % transpose
\providecommand{\mc}[1]{\mathcal{#1}}
\providecommand{\mb}[1]{\boldsymbol{#1}}
\newcommand{\Real}{\mathbb{R}}
\newcommand{\II}{\mathbb{I}}           % indicator function
\newtheorem{assumption}{Assumption}
\newcommand{\E}{\ensuremath{\mathbb{E}}}
\newcommand{\I}{\ensuremath{\mathbbm{1}}}
\begin{document}

% \title{Estimating Information-Theoretic Quantities with Uncertainty Forest}
% \title{Estimating Information-Theoretic Quantities using Random Forest}
\title{Random Forests for Adaptive Nearest Neighbor Estimation of Information-Theoretic Quantities}
\author{Ronan Perry$^1$, Ronak Mehta$^1$, Richard Guo$^1$, Eva Yezerets$^1$, Jes\'us Arroyo$^1$, Mike Powell$^1$, Hayden Helm$^1$, Cencheng Shen$^1$, and Joshua T. Vogelstein$^{1,2}$\thanks{Corresponding author: \email{jovo@jhu.edu}; $^1$Johns Hopkins University, $^2$Progressive Learning}}

\maketitle
\thispagestyle{empty}

\noindent
\setcounter{tocdepth}{3}
\setcounter{secnumdepth}{3}

\vspace{-15pt}

\pagenumbering{arabic}
\setcounter{page}{1}

\setcounter{theorem}{0}
\setcounter{lem}{0}
\setcounter{assumption}{0}

\begin{abstract}
    Information-theoretic quantities, such as conditional entropy and mutual information, are critical data summaries for quantifying uncertainty.
    Current widely used approaches for computing such quantities rely on nearest neighbor methods and exhibit both strong performance and theoretical guarantees in certain simple scenarios.
    However, existing approaches fail in high-dimensional settings and when different features are measured on different scales.
    We propose decision forest-based adaptive nearest neighbor estimators and show that they are able to effectively estimate posterior probabilities, conditional entropies, and mutual information even in the aforementioned settings.
    We provide an extensive study of efficacy for classification and posterior probability estimation, and prove certain forest-based approaches to be consistent estimators of the true posteriors and derived information-theoretic quantities under certain assumptions.
    In a real-world connectome application, we quantify the information contained in various biological and learned features about neuron cell type in the Drosophila larva mushroom body.
\end{abstract}

%\vspace{-10pt}

\section{Introduction}

Uncertainty quantification is a fundamental desiderata of statistical inference and data science.  In supervised learning settings it is common to quantify uncertainty with either conditional entropy or mutual information (MI). 
Suppose we are given a pair of random variables $(X, Y)$, where $X$ is $d$-dimensional vector-valued and $Y$ is a categorical variable of interest. Conditional entropy $H(Y|X)$ measures the uncertainty in $Y$ on average given $X$. On the other hand, mutual information quantifies the shared information between $X$ and $Y$.
Although both quantities are readily estimated when $X$ and $Y$ are low-dimensional and ``nicely'' distributed, an important problem arises in measuring these quantities from higher-dimensional data in a nonparametric fashion \citep{mines}. A popular family of mutual information estimators \citep{ksg, mixedksg} are derived from the $k$-nearest neighbor ($k$-NN) classifier. However, they are poorly suited in settings where the $L2$ distance, $d(X_1, X_2) = \sqrt{\sum_{k=1}^d(X_{1k} - X_{2k})^2}$, is uninformative, such as high-dimensional noisy settings and when features exist on different scales, both cases which would require potentially nontrivial and unstable preprocessing.

We address this limitation by proposing decision forest methods for estimating conditional entropy under the framework that $X$ is any  $d$-dimensional random vector and $Y$ is  categorical. Decision trees \citep{statlearning}, ensembled to compose a decision forest, are effectively adaptive nearest neighbor estimators in that they partition the feature space into regions in a process invariant to monotone transformations of the features and which has the capacity to ignore unimportant features \citep{lin_random_2006, hastie_discriminant_1996, balsubramani_adaptive_2019}. 
Classification decision forests make predictions by learning estimates of the posterior probabilities $P(Y | X)$. Because $Y$ is categorical, one can easily compute the maximum-likelihood estimate of $H(Y | X)$ from the posteriors \cite{mukherjee_ccmi_2020}. Besides the canonical Random Forest \citep{Breiman2001} built from Classification and Regression Trees \citep{Breiman1984-aq}, a variety of modified decision forest methods have been developed to better estimate posterior probabilities (i.e., better \textit{calibrated}). These include the Platt Scaling \citep{platt_probabilistic_1999} and Isotonic Regression \citep{isotonic} post-hoc calibrations of decision forest posteriors to better reflect estimates on a hold-out subset of the  data. Another variant developed with asymptotic theory in mind fits the posteriors of a tree using a hold-out, \textit{honest} subsample independent of the data used to learn the tree \citep{denil14, Wager2018}. Despite the success of these honest forests in a variety of regression settings, we are unaware of any study of honesty in the case of classification. Although posterior estimation relies on conditional mean estimation just as in a regression setting, it is a harder task in that one only has access to the categorical responses and never actually observes the continuous posteriors which are to be estimated.

\paragraph{Contributions:} We summarize our contributions as an outline of the paper.
\begin{itemize}
    \item We provide the first large-scale comparison of the accuracy and calibration of honest decision forests.
    \item In simulated examples, we demonstrate specific regimes where honest forests are better calibrated than other forest methods.
    \item We prove theoretical consistency results of honest forest posterior probability estimates and mutual information (MI) estimates.
    \item We illustrate three ways to compute MI using decision forests and demonstrate through simulations that not only do honest forests produce the most robust, accurate estimates across settings, but also that all forest methods show benefits over the $k$-NN-based methods in certain settings.
    \item We measure the MI between various \textit{Drosophila} neuron properties and the neuron cell type and show the results provide a meaningful understanding of the features.
\end{itemize}

\section{Background}
We begin by providing an overview of conditional entropy and mutual information, as well as a review of methods aimed at estimating these quantities. After motivating decision forests as alternatives to the popular $k$-Nearest Neighbor methods, we provide an overview of four different decision forest variants with experiments highlighting how they differ in terms of posterior probability estimation; these posterior probabilities will serve as the basis for the conditional entropy and mutual information estimates in later sections.

\subsection{Mutual Information}

Suppose we are given two random variables $X$ and $Y$ with support sets $\mathcal{X} \subseteq \Real^d$ and $\mathcal{Y} := [K] = \{1, ..., K\}$, respectively, for positive integers $d$ and $K$. Let $x$, $y$ denote specific values that the random variables take on, with $p(x)$ and $p(y)$ denoting the probabilities $P(X=x)$ and $P(Y=y)$, respectively. While $P(Y = y)$ measures the uncertainty surrounding the value $y$, the Shannon entropy,
$$H(Y) = -\sum_{y \in \mathcal{Y}} p(y) \log p(y),$$
measures the total uncertainty of the random variable $Y$. In the presence of information on the random variable $X$, the conditional probability, or posterior, $P(Y = y | X = x)$ reflects the uncertainty of $y$ given $X = x$. Analogously, we can write the entropy of $Y$ given $X = x$ as
$$H(Y | X = x) = -\sum_{y \in \mathcal{Y}} p(y | x) \log p(y | x)$$
and the conditional entropy as
$$ H(Y|X) = \sum_{x \in \mathcal{X}}{p(x)H(Y|X = x)} = -\sum_{x \in \mathcal{X}}{p(x)\sum_{y \in \mathcal{Y}}{p(y|x)\log{p(y|x)}}}.$$ The conditional entropy represents the expected uncertainty in $Y$ having observed $X$. In the case of a continuous random variable, the sum over the corresponding support is replaced with an integral, and probability mass functions are replaced by densities; this is technically the differential entropy, a limiting case of conventional entropy with some minor differences, but we will refer to both cases as entropy throughout the rest of this paper. Additionally, we will use the natural logarithm for all entropy calculations in this paper.

Mutual information $I(X; Y)$ in turn measures the mutual dependence of $X$ and $Y$, and can be computed from conditional entropy symmetrically in the equalities
$$I(X; Y) = H(Y) - H(Y|X) = H(X) - H(X|Y).$$
Mutual information has many appealing properties, such as symmetry and non-negativity, and is widely used in data science applications such as feature selection and independence testing \citep{mixedksg}. For instance, $I(X; Y) = 0$ if and only if $X$ and $Y$ are independent.

\subsection{Related Work}

A common approach to estimating mutual information relies on breaking it into a sum of terms and estimating each term separately. One such approach uses the \textit{3-H} principle \citep{mixedksg}, written as
$I(X; Y) = H(Y) + H(X) - H(X, Y)$,
where $H(X, Y) = -\sum_{x \in \mathcal{X}} \sum_{y \in \mathcal{Y}} p(x,y) \log p(x,y)$ is the Shannon entropy of the pair $(X, Y)$. Examples using this approach include kernel density estimators and ensembles of $k$-NN estimators \citep{Beirlant1997, Leonenko08estimationof, berrett2019, sricharan}. One method in particular, the KSG estimator, improves $k$-NN estimates via heuristics and is popular because of its excellent empirical performance \citep{ksg}. Other approaches include binning~\citep{binning} and von Mises estimators~\citep{vonmises}. Unfortunately, many modern datasets contain mixtures of continuous inputs $X$ and discrete outputs $Y$. In these cases, while individual entropies $H(X)$ and $H(Y)$ are well-defined, $H(Y, X)$ is either ill-defined or not easily estimated, thus rendering the \textit{3-H} approach intractable~\citep{mixedksg}. A recent approach, referred to as Mixed KSG \citep{mixedksg}, modifies the KSG estimator to improve its performance in various settings, including mixed continuous and categorical inputs. 

Computing both mutual information and conditional entropy becomes difficult in higher dimensional data. Numerical summations or integration become computationally intractable, and nonparametric methods (for example, $k$-nearest neighbor, kernel density estimates, binning, Edgeworth approximation, likelihood ratio estimators) typically do not scale well with increasing dimensions \citep{mines, gaoetal}. While the recently proposed neural network approach MINE \citep{mines} addresses high-dimensional data, the method does not return an estimate of the posterior $p(y \mid x)$, which is useful in uncertainty quantification \citep{sarawgi_uncertainty-aware_2021}, and has been found difficult to properly tune correctly even in toy cases \citep{mukherjee_ccmi_2020}. Such deep learning approaches additionally make assumptions that are difficult to check or assume in practice in order to guarantee good estimation of the target quantity, either requiring that the network of choice be expressive enough so that the Donsker-Varadhan representation can be used \citep{mukherjee_ccmi_2020}, or limiting the distribution of learned weights \citep{deep1}.
% Finally, another common pitfall of "out of the box" deep learning approaches is sensitivity to hyperparameter choice.

Another approach is to use the decomposition $I(X; Y) = H(Y) - H(Y|X)$ \citep{mukherjee_ccmi_2020}. The entropy $H(Y)$ of the discrete random variable $Y$ is easily estimatable from the empirical frequencies. The conditional entropy $H(Y|X)$ meanwhile is a function of the data only through the posterior $p(y \mid x)$ and integration over the feature space. So, an estimate of the posterior with an estimate of the density is sufficient to estimate mutual information. Through this approach, any algorithm which can estimate posterior probabilities is applicable \citep{mukherjee_ccmi_2020}.

\section{Posterior Estimation for Uncertainty Quantification}

In order to compute mutual information and other entropy quantities via posterior probabilities, we must first establish and understand methods that estimate posterior probabilities.

\subsection{Decision Forest Methods for Posterior Estimation} \label{sec:cart}

Decision forests are powerful classification algorithms that estimate posterior probabilities. They function as adaptive nearest neighbor methods and so are natural alternatives to the limited $k$-nearest neighbor approaches underlying the KSG estimators. We elaborate below on the qualification of the random forest algorithm and two of its variants.

\subsubsection{Random Forests}
Random forest (RF) is a robust, powerful algorithm that leverages ensembles of decision trees for classification tasks \citep{Breiman2001} by computing posterior probabilities. In a study of over 100 classification problems of mostly tabular data sets, F\'ernandez-Delgado et al. \citep{Delgado14} showed that random forests have the overall best performance when compared to 178 other classifiers. Furthermore, random forests are highly scalable; efficient implementations can build a forest of 100 trees from 110 Gigabyte data ($n$ = 10,000,000, $d$ = 1000) in little more than an hour \citep{scalablerf}.

Random forest classifiers are instances of a bagging classifiers in which many decision trees are learned on bootstrapped subsets of the data and then ensembled together. A decision tree first learns a partition of feature space and then learns constant functions within each region of the feature space to perform classification or regression. Precisely, $ \mc{L} $ is called a partition of feature space $ \mc{X} $, if for every $ L, L' \in \mc{L}$ with $L \neq L'$, $L \cap L' = \varnothing$ and $ \bigcup_{L \in \mc{L}} L = \mc{X} $. This $ \mc{L} $ is learned on data $\{X_i, Y_i\}_{i=1}^n$ by recursively splitting a randomly selected subsample along a single dimension of the input data based on an impurity measure \citep{Breiman2001}, such as Gini impurity or entropy. Decision trees are binary trees grown by recursively adding two nodes at each split until a node reaches a certain criterion (for example, a minimum number of samples). These unsplit nodes are called ``leaf nodes''. To better decorrelated the trees, typically only a random subset of dimensions of $X$ are considered at each split node.

Given a partition $ \mc{L} $, let $L(x)$ be the part of $ \mc{L} $ to which $x \in \mc{X}$ belongs. Letting $\I[\cdot]$ be the indicator function, a possible predictor function $\hat{g}$ for classification is $\hat{g}(x) = \argmax_{y \in \mc{Y}} \sum_{i=1}^n \I[Y_i = y, X_i \in L(x)]$. This is the plurality vote among the training data in the leaf node of $x$. For a regression task, a corresponding predictor $\hat{\mu}$ is $\hat{\mu}(x) = \sum_{i = 1}^n Y_i \cdot \I[X_i \in L(x)] / \sum_{i = 1}^n \I[X_i \in L(x)]$, which is the average $Y$ value for the training data in $L(x)$. For a forest, $B$ trees are learned on randomly subsampled points. These points used in tree construction are called the `in-bag' samples for that tree, while those that are left out are called `out-of-bag' (\textit{oob}) samples.

\subsubsection{Calibrated Forests}

A common problem for classification algorithms in general is ensuring that predicted posteriors are accurate (calibrated). For a perfectly calibrated classifier, the predicted posterior equals the true, but unknown, posterior. Although RF is powerful, its posteriors are not necessarily well calibrated \citep{zadrozny_obtaining_2001, niculescu-mizil_predicting_2005}. Two popular and well-studied calibration methods applicable to classifiers such as RF are Platt Scaling \citep{platt_probabilistic_1999} and Isotonic Regression \citep{isotonic}.

Platt Scaling takes in a classifier $f$ fit on part of the data and then fits a logistic function
$$P(Y = y \mid f(y \mid x)) = \frac{1}{1 + \exp(af(y \mid x) + b)}$$
on top of the classifier posteriors $f(y \mid x)$, where $a$ and $b$ are optimized to minimize the log loss on the remaining, held-out data. Isotonic Regression similarly takes in a classifier $f$ fit on part of the data but instead fits the model
$$P(Y = 1 \mid f(y \mid x)) = m(f(y \mid x)) + \varepsilon,$$
where $m$ is a monotonically increasing function that minimizes the mean-squared error on the remaining, held-out data. Isotonic Regression is more flexible, but it suffers in low sample size settings compared to Platt Scaling.

Platt Scaling Random Forests (SigRF) and Isotonic Regression Random Forests (IRF) have been tested extensively \citep{zadrozny_obtaining_2001, niculescu-mizil_predicting_2005}, and both empirically improve posterior calibration over RF. The Python implementations of SigRF, IRF, and RF used throughout this paper are publicly available from the \texttt{scikit-learn} Python package \citep{scikit-learn}.

\subsubsection{Honest Forests}
In conventional decision forest algorithms, data are typically split to maximize purity within child nodes, so the posteriors estimated in each leaf tend to be biased toward certainty. \textit{Honesty} helps in bounding the bias of tree-based estimates \citep{denil14,Athey2019-uw,Wager2018}. \citet{Wager2018} describe honest trees as those for which any particular training example $(X_i, Y_i)$ is used to either partition feature space or to estimate the quantity of interest, but not both.
One way to achieve this property is to split the observed samples into two sets for each tree: one set for learning the partitions of feature space $\mathcal{X}$ and one set to establish the plurality or average within each leaf node. 
We refer to them as the  ``partition" set $\mathcal{D}^{\text{P}}$ and ``voting" set $\mathcal{D}^{\text{V}}$, respectively.
% (\citet{denil14} called them ``structure'' and ``estimation'', which we avoid because both sets are used to estimate different quantities).

For example, say we wish to estimate the conditional mean function $\mu(x) = \mathbb{E}[Y \mid X = x]$ in a single tree. Let $\mathcal{L}$ be a partition of feature space $\mc{X}$, as described in Section \ref{sec:cart}. Letting $m < n$, such an $\mc{L}$ can be learned via a decision tree with $\mathcal{D}^{\text{P}} = \{(X_1,Y_1),...,(X_m,Y_m)\}$. This leaves $\mathcal{D}^{\text{V}} = \{(X_{m+1},Y_{m+1}),...,(X_n,Y_n)\}$. Letting $L(x)$ be the part of $\mathcal{L}$ to which $x$ belongs, the conditional mean estimate can be
$$\hat{\mu}(x) = \frac{\sum_{\mathcal{D}^{\text{V}}} Y_i \cdot \I[X_i \in L(x)]}{\sum_{\mathcal{D}^{\text{V}}} \I[X_i \in L(x)]} = \frac{\sum_{i = m + 1}^n Y_i \cdot \I[X_i \in L(x)]}{\sum_{i = m + 1}^n \I[X_i \in L(x)]}.$$

\noindent
When learning the entire forest, each tree uses a different random partition of the samples into partition and voting subsets. This approach is data efficient in that each sample will always be used for either learning the tree structure or fitting the posteriors in each tree, and with high probability each sample will be used for both across all trees in the forest.

% Need to define terms in this paragraph if it is reincorporated
% Without honest posteriors, the bias of the greedy tree splits means that theoretical results are hard to prove. However, for honest, balanced, random-split trees, consistency can be shown for a very general class of random forest estimates including conditional mean estimation \citep{Athey2019-uw}. Balanced trees are described by \citet{Wager2018} and \citet{Athey2019-uw} as $\alpha$-regular, meaning that at each split in a tree, at least an $\alpha$ fraction of the data are placed in each child node. Random-split trees are those for which each dimension has a nonzero chance of being used for the split are called random-split.
%This can be achieved, for example, by randomly choosing the number of dimensions to search over from a distribution containing 1 in its support.

While honesty has been studied in the context of regression estimates \citep{Wager2018, Athey2019-uw}, here we provide empirical results and an extension of their consistency results to posterior prediction for classification tasks, as well as functions of the posterior probabilities. We term the honest Random Forest for posterior estimation as Uncertainty Forest (UF) because it is specifically estimating the uncertainty of a categorical label. An open source Python implementation available at  \url{https://neurodata.io/code/} has been developed and is used throughout this paper.

\subsection{Asymptotic Consistency of Honest Posterior Estimates}\label{sec:consistency_posteriors}

Under mild conditions, UF provides a consistent estimate of posterior probabilities. This will be useful later as well for results on the consistent estimates of conditional entropy and mutual information. The basis of this result comes from \citet{Athey2019-uw} who establish a class of forests whose estimates are asymptotically consistent under mild distributional assumptions, specifications of the forest algorithm, and requirements on the quantity being estimated. We follow their conditions to prove that UF posterior probability estimates are consistent.

We assume that $\mathcal{Y}$ is discrete ($\mathcal{Y} = [K]$), and that $\mathcal{X} = \Real^d$. Denote the finite sample conditional entropy estimate as $\hat{H}_n(Y \mid X)$ and mutual information estimate as $\hat{I}_n(X; Y)$, now indexed by the sample size $n$ for explicitness. Consider the following specification and assumptions.

\paragraph{Specification 1}\label{spec}
Directly from \citet{Athey2019-uw}, we require that (1) all trees are invariant to the ordering of the training examples, (2) at each split each child node receives a nonzero fraction of the samples in the parent node and splits with nonzero probability on each feature (this can be done by randomly choosing the number of candidate features to split on at each node \citep{denil14}), (3) each tree is learned on a subsample of $\mathcal{D}_n$ with size $s_n$ chosen such that $s_n \rightarrow \infty$ and $\frac{s_n}{n} \rightarrow 0$, and (4) each tree has $|\mathcal{D}^{\text{V}}| = \gamma s_n$ honest samples for $0 < \gamma < 1$.

\begin{assumption}
    \label{assmp:support}
    Suppose that $X$ is supported on $\Real^d$ and has a density which is non-zero almost everywhere. This is equivalent to having $X$ be uniformly distributed in $[0, 1]^d$ due to the monotone invariance of trees \citep{denil14}.
\end{assumption}
\begin{assumption}
    \label{assmp:lipshitz}
    Suppose that for each $y \in [K]$ the conditional probability $p(y \mid x)$ is Lipshitz continuous on $\Real^d$.
\end{assumption}

With the proof given in Appendix \ref{sec:proof}, we have the following lemma.

\begin{lem}[Consistency of the Posterior Probabilities]
    Given Assumptions 1-2 and UF built to Specification 1, for all $y \in [K]$ and all $x \in \Real^d$, the estimate $\hat{p}_n(y \mid x) \overset{P}{\rightarrow} p(y \mid x)$ as $n \rightarrow \infty$.
    \label{thm:prob}
\end{lem}
This yields the consistent estimation of posterior probabilities in the classification setting. The required Assumptions and Specification meet those of \citet{Athey2019-uw} and the posterior probability is an appropriate conditional expectation quantity.

\subsection{Empirical Results: Random Forest Posterior Estimation}

We now provide empirical results illustrating how the four aforementioned forest algorithms compare across various simulated and real settings.

\subsubsection{Simulation: Uncertain Posteriors}\label{sim:overlapping_gaussians}
An illustration of how RF, SigRF, IRF, and UF all differ in posterior estimation is provided in Figure \ref{fig:overlapping_gaussians}. Inspired by a \texttt{scikit-learn} tutorial \citep{scikit-learn}, data from two classes are sampled where each class is a mixture of two isotropic multivariate Gaussian distributions. Let $\mathcal{N}(x; \mu)$ denote the density of a Gaussian distribution with mean $\mu$ and identity covariance. Then the class-conditional density is $f(X=x \mid Y=k) = \frac13 \mathcal{N}(x; [0, 0]^T) + \frac23 \mathcal{N}(x; \mu_k)$ where $k \in \{-1, 1\}$, $P(Y = k) = \frac12$, and $\mu_k = [5k, 5k]^T$. Each forest was composed of $100$ trees, and SigRF and IRF used $5$-fold internal cross validation on $20$ trees per fold to calibrate the posteriors. UF used half of the training samples per tree for learning the structure and half for setting the posteriors.

Although there exists high uncertainty where the classes share a Gaussian density centered at $\mu = [0, 0]^T$, the RF predictions are overconfident. SigRF is able to mitigate this slighty in some regions at the cost of other regions, but IRF and UF do much better jobs at calibrating overall.

\begin{figure}[htb!]
  \centering
  \includegraphics[width=\linewidth]{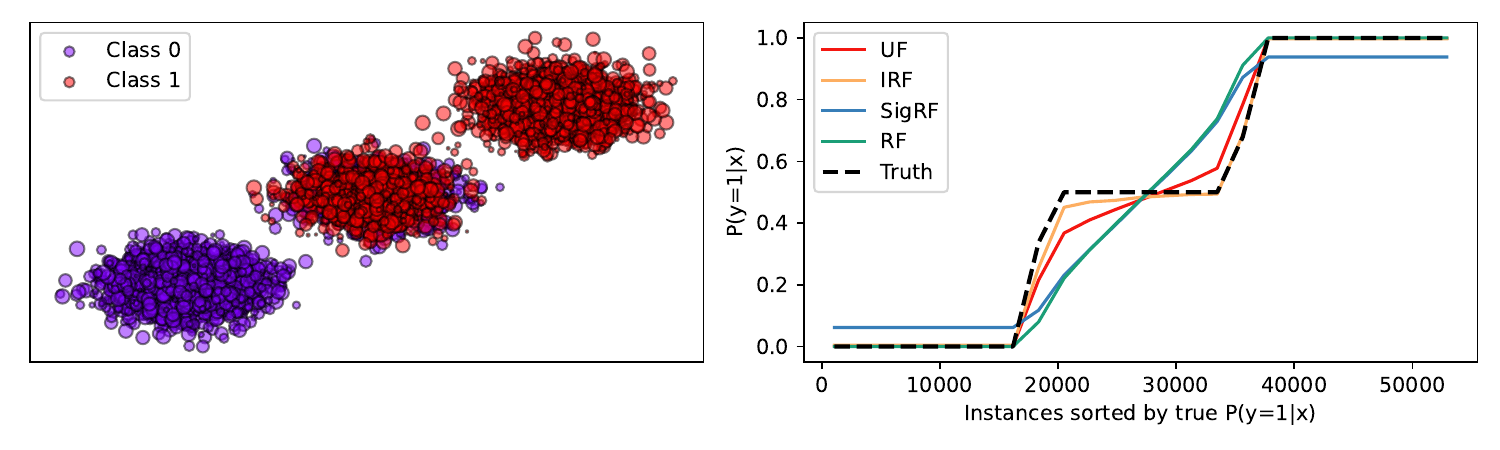}
  \caption{(\textbf{Left}) Simulated data. (\textbf{Right}) Although RF is overconfident in its predicted posteriors, SigRF, IRF, and UF all calibrate the posteriors to some extent. The true sample posteriors are plotted as a dashed line. $6$,$000$ samples were used for training, and posteriors were evaluated on $54$,$000$ samples from the same distribution.}
  \label{fig:overlapping_gaussians}
\end{figure}

\subsubsection{Simulation: Steep Posteriors}\label{sim:steep_posteriors}
A common problem for a variety of machine learning algorithms is prediction at the boundary of the feature space. \citet{Wager2018} showed how honest regression forests perform better when there is a sharp transition point in the regression estimate; inspired by their work, we show how honesty improves classification when the posterior changes abruptly. Let $X \sim Unif[0, 1]^d$ be a $d$-dimensional random variable distributed uniformly in a $d$-dimensional hypercube. Let $\mathcal{Y} = \{0, 1\}$ and $P(Y = 1 | X = x) = \prod_{j=1}^2[1 + \exp(-\alpha(x_j-1/2))]^{-1}$ where $x_j$ is the $j$th dimension of $x$ and $\alpha$ is a hyperparameter controlling the rate at which the posterior changes near the decision boundary. When $\alpha=0.5$, the posterior is flat everywhere and there is no possible predictive power, but as $\alpha$ increases, the class-specific posteriors approaches certainty. Note that only the first two dimensions of $X$ are informative while the rest are  uninformative noise features.

Each of the forests was fit to $5$,$000$ samples from the above joint distribution given a varying $\alpha$ and fixed dimension $d=4$. The calibration error of a trained forest's predicted $k$-class posterior vector $p(x)$ at a point $x$ was evaluated using the Hellinger distance $\frac{1}{\sqrt{2}}\sum_{j=1}^k (\sqrt{p_j(x)} - \sqrt{q_j(x)})^2$ to the known true posterior $q(x)$. The expected Hellinger loss of each forest was approximated by the average error of each forest on $40$,$000$ new samples whose first two (informative) dimensions were deterministically spaced equally across a $50\times50$ grid and remaining $d-2$ dimensions were sampled uniformly. Each of the decision forest methods were composed of $500$ trees, $5$ folds of $100$ trees for IRF and SigRF, and used all of the features in each split due to the sparsity of the signal. Here, UF used $63\%$ of the samples to learn the tree structure to better equate to the number of unique samples used during bootstrapping, although iterations using other resampling percentages indicated that the results are insensitive to this value in this scenario.

As shown in the leftmost plot of Figure \ref{fig:steep_posteriors}, once $\alpha$ is slightly larger than $1$, UF dominates the other methods. With increasing sharpness $\alpha$, the non-UF methods degrade in their calibration. On the right side of Figure \ref{fig:steep_posteriors}, we see in the upper left heatmap how at $\alpha = 1$, the posteriors appear close to uniform across the first two dimensions whereas at $\alpha=12$, in the lower left heatmap, the posteriors transition more sharply. Each of the subsequent columns show how the predicted posteriors for each method and $\alpha$ differ from the true posteriors. A red value means that the forest is underconfident while a blue value means that is overconfident. We see pronounced effects at the boundary of the feature space and at the decision boundary, although UF mitigates these artifacts best. Additional heatmaps and plots may be found in Appendix \ref{sec:supp_steep}.

\begin{figure}[htb!]
    \centering
    \includegraphics[width=0.42\linewidth]{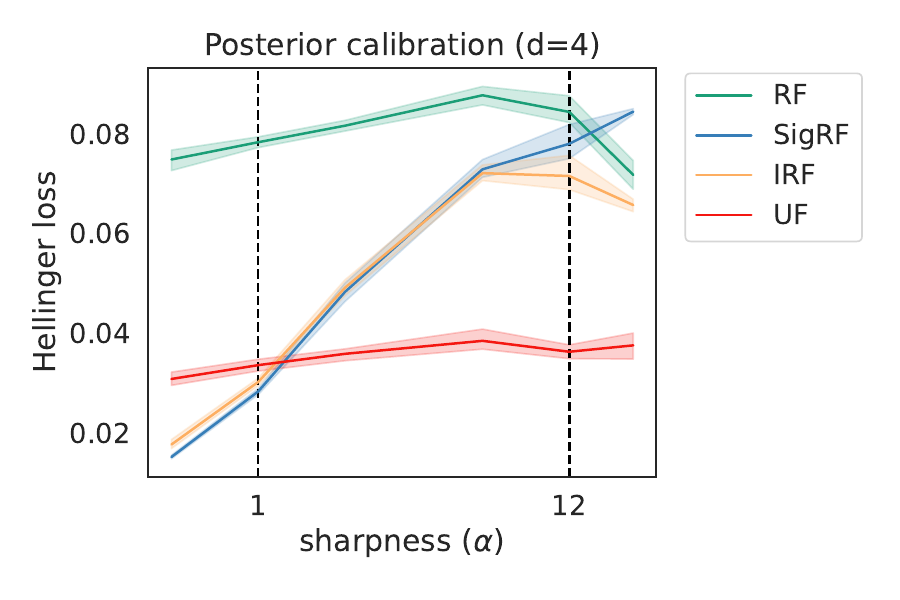}
    \includegraphics[width=0.56\linewidth]{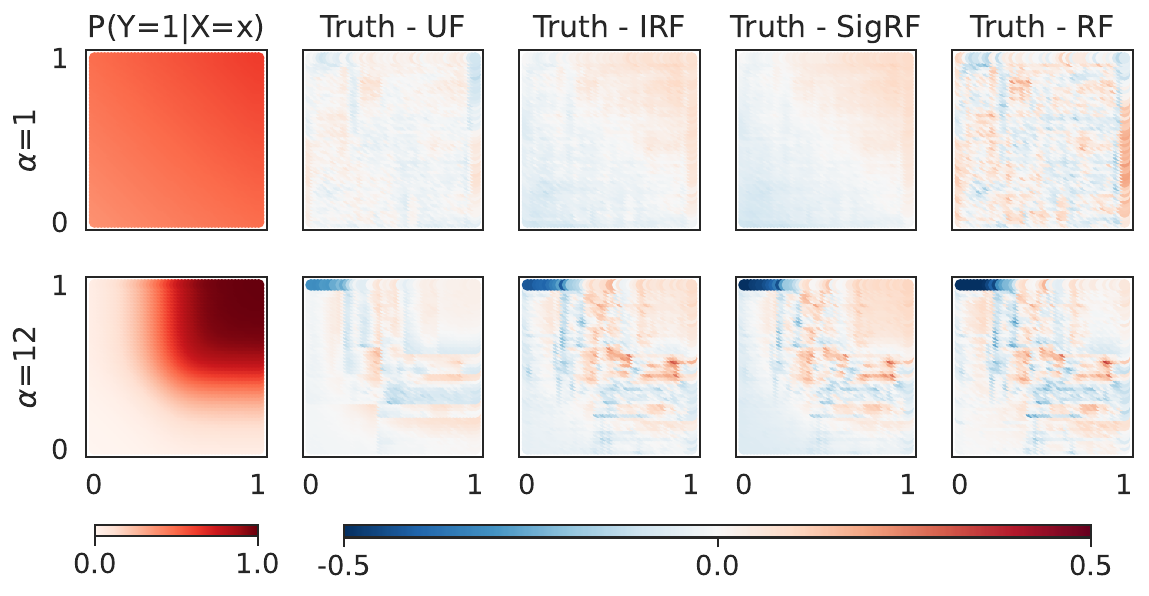}
    \caption{(\textbf{Left}) UF dominates RF, SigRF, and IRF in expected Hellinger distance to the true posterior in this simulated example across larger values of $\alpha$. (\textbf{Right}) At higher $\alpha$, the posteriors transition faster and UF predictions are closest to the truth, noticeably at the decision boundary. $5$,$000$ points in four dimensions were sampled for training, each of three times to compute the mean and variance of the estimate.}
    \label{fig:steep_posteriors}
\end{figure}

\subsubsection{OpenML CC18 Classification Tasks: Accuracy and Calibration}\label{sec:cc18}
We additionally evaluate RF, IRF, SigRF and UF on the 72 CC18 classification datasets from OpenML \citep{feurer_openml-python_2019, vanschoren_openml_2014} to complement our simulations. To our knowledge, while there have been many examples of honest forests for regression, our results here are the first examination of honest forests for posterior prediction and classification. The lengthy details and results are listed in full in Appendix \ref{sec:supp_cc18}, but the key results are shown in Table \ref{tab:cc18_tables}.

\begin{table}[htb!]
   \centering
  \begin{tabular}{| c | c | c | c | c || c | c | c | c |}
    \cline{2-9}%\hline
    \multicolumn{1}{c|}{} & 
    \multicolumn{4}{c||}{\textbf{Cohen's kappa Label Error}} &
    \multicolumn{4}{c|}{\textbf{Calibration Error (ECE)}}
    \\
    \cline{2-9}
     \multicolumn{1}{c|}{} & RF & IRF & SigRF & UF 
     & RF & IRF & SigRF & UF
     \\
    \hline
	 RF &  & -0.003* & -0.003* & -0.036* & & 0.017 & 0.004 & 0.014\\
    IRF & 0.003 & & 0.000* & -0.030* & -0.017* & & -0.016* & -0.007*\\
    SigRF & 0.003 & 0.000 & & -0.028* & -0.004 & 0.016 & & 0.002\\
    UF & 0.036 & 0.030 & 0.028 & & -0.014* & 0.007 & -0.002 & \\
    \hline
  \end{tabular}
  \caption{(\textbf{Left}) Median difference (row minus column) in Cohen's kappa misclassification error between each pair of methods across the CC18 datasets. (\textbf{Right}) Median pairwise difference in expected calibration error. A cell with a negative median difference indicates that the method in that row outperformed the method in that column. An asterisk means that the difference is significantly less than zero according to a one-sided pairwise Wilcoxon Sign test at the $\alpha = 0.05$ level. See Appendix \ref{sec:supp_cc18} for further details.}
  \label{tab:cc18_tables}
\end{table}
In terms of accuracy, RF performs the best, IRF and SigRF are approximately the same (which makes sense as they only differ in the post-training calibration) and UF performs the worst, as might be expected from its splitting of data. In terms of calibration error, as measured by the difference between the predicted posteriors and approximated true posteriors, IRF is the clear winner. Its adjustment to the posteriors is meant to directly minimize calibration error and is more flexible than SigRF. The next most calibrated method is UF, which shows a significant improvement over RF, unlike SigRF. Intuitively, it makes sense that UF is well calibrated since the honest posteriors in a single tree are independently and identically distributed as the test samples; thus, in a single tree the miscalibration is simply due to noisy and small leaves, although the effects of aggregation at the forest level are less obvious.

\section{Conditional Entropy and Mutual Information Estimation}\label{sec:mi_estimation}

Having detailed the capacity of decision forest methods to estimate posterior probabilities, we now introduce three approaches for computing conditional entropy and mutual information using these forests. We compare the forest-based estimates to the $k$-Nearest Neighbor based KSG methods in a variety of simulated examples and then derive MI estimates in the context of a \textit{Drosophila} connectome data set.

\subsection{Calculating Entropy Quantities via Decision Forests}
Given observations\\ $\mathcal{D}_n = \{(X_1, Y_1), ..., (X_n, Y_n)\}$, the goal is to estimate conditional entropy $H(Y|X)$
%= \mathbb{E}_{X'} [H(Y|X = X')]$,
and thus mutual information by integrating posteriors learned from a decision forest over the feature space. To illustrate, imagine a trained forest $f(\cdot)$ providing posterior estimates and a set of samples $\{x'_1, \dots, x'_m\}$. Per the plug-in principle \cite[Section 2.1.2]{BandD_1977}, whereby we approximate a function on a distribution with the same function on the empirical distribution, we immediately obtain the estimate
$$H(Y|X) = \frac1m \sum_{i=1}^m H(Y|X = x'_i)] = -\frac1m \sum_{i=1}^m f(x'_i)\log f(x'_i).$$

In a conventional supervised setting, we are given just $\mathcal{D}_n$ and must both use the data to learn a classifier and to estimate the density. We clearly cannot use the same sample in $\mathcal{D}_n$ for both learning a decision tree and estimating $H(Y|X)$ since by learning the tree structure on that sample, we have overfit our model to it and our posteriors would be overconfident.

\subsubsection{Sample Splitting Estimates}
The first estimation strategy is the \textit{sample splitting} approach where we fit a forest to a subset $\{(X_i, Y_i)\}_{i=1}^m \subset \mathcal{D}_n$ and estimate $H(Y | X)$ using the remaining $\{(X_i, Y_i)\}_{i=m+1}^n$ samples, $m < n$. This solves our double-dipping problem but is an inefficient use of data, notably failing to use any of the labels $\{Y_{m+1}, \dots, Y_n\}$. The pseudocode is seen in Algorithm \ref{alg:mi_sample_splitting}.

\begin{algorithm}[htb!]
  \caption{Sample Splitting MI Estimation (see Appendix \ref{sec:code} for further details)}
  \label{alg:mi_sample_splitting}
\begin{algorithmic} 
\Require Training set $\mathcal{D}_n$, number of trees $B$, subsample size $s$, number of classes $K$
\State $\mathcal{D}^{\text{E}} \gets \textproc{RandomSubset}(\mathcal{D}_n, s)$ \Comment{Subsample estimation subset}
\State $\mathcal{D}^{\text{P}} \gets \mathcal{D}_n \setminus \mathcal{D}^{\text{E}}$ \Comment{Tree fitting subset}
\For{$b$ \textbf{in} B}
    \State $\mathcal{D}^{\text{P}}_{bootstrap} \gets \textproc{Bootstrap}(\mathcal{D}^{\text{P}}, n)$
    \State tree $\gets \textproc{FitDecisionTree}(\mathcal{D}^{\text{P}}_{bootstrap})$ \Comment{Structure and posteriors fit to this data}
    % \State tree $\gets \textproc{FitPosteriors}(tree, \mathcal{D}^{\text{P}}_{bootstrap})$ 
    \For{$X_i$ \textbf{in} $\mathcal{D}^{\text{E}}$} \Comment{Posteriors on estimation subset}
        \State $\hat{p}_b(Y | X_i) \gets \textproc{EstimatePosterior}(tree, X_i)$
    \EndFor
\EndFor
\For{$X_i$ \textbf{in} $\mathcal{D}^{\text{E}}$} \Comment{Average tree-level posteriors}
    \State $\hat{p}(Y | X_i) \gets \frac1B \sum_{b=1}^B \hat{p}_b(Y | X_i)$
\EndFor
\State $\hat{H}(Y \mid X) \gets \frac{1}{|\mathcal{D}^{\text{E}}|} \sum_{X_i \in \mathcal{D}^{\text{E}}} \big[ - \sum_{y=1}^K \hat{p}(Y = y | X_i) \log \hat{p}(Y = y | X_i)\big]$
\State $\hat{H}(Y) \gets - \sum_{y=1}^K \hat{p}(y) \log \hat{p}(y), \quad \hat{p}(y) = \frac{1}{n} \sum_{i=1}^n \I[Y_i = y]$

\Ensure Mutual information $\hat{H}(Y) - \hat{H}(Y \mid X)$.
\end{algorithmic}
\end{algorithm}

\subsubsection{Out-of-Bag Estimates}
Alternatively, by using decision forests to estimate the posteriors we are able to cleverly leverage all of $\mathcal{D}_n$ without double-dipping. For each tree in the forest, $n$ samples are bootstrapped, which results in approximately $63.2\%$ unique samples from $\mathcal{D}_n$ being used in each tree on average \citep{efron_improvements_1997}. Thus, the `out-of-bag' (\textit{oob}) samples in each tree account for approximately $36.8\%$  of the samples from $\mathcal{D}_n$. Given a trained forest $f(\cdot)$, we can define the estimator $f_{oob}(x)$ as the forest composed of trees in $f$ for which $x$ was an \textit{oob} sample, so as not to double-dip into the training data. As \citet{Breiman2001} pointed out, the $f_{oob}(x)$ estimates of generalization error are unbiased estimates of the true generalization error. So, our \textit{oob} conditional entropy estimator is
$$H(Y|X) = \frac1n \sum_{i=1}^n H(Y|X = x_i)] = -\frac1n \sum_{i=1}^n f_{oob}(x)\log f_{oob}(x)$$
and leverages a forest trained on all of $\mathcal{D}_n$. This method applies to RF, SigRF, and IRF. It may also apply to UF if bagging is used prior to the honest sample splitting, which we omit so as to guarantee more samples for learning the structure and posteriors given the low correlation of trees due to honesty. Note that due to random sampling, while unlikely, there are no guarantees that a given training sample will ever be out-of-bag. The pseudocode is seen in Algorithm \ref{alg:mi_oob}.

\begin{algorithm}[htb!]
  \caption{\textit{oob} MI Estimation (see Appendix \ref{sec:code} for further details)}
  \label{alg:mi_oob}
\begin{algorithmic} 
\Require Training set $\mathcal{D}_n$, number of trees $B$, number of classes $K$
\For{$b$ \textbf{in} B}
    \State $\mathcal{D}^{\text{P}}_{bootstrap} \gets \textproc{Bootstrap}(\mathcal{D}_n, n)$ \Comment{Bootstrap from full data}
    \State $\mathcal{D}^{\text{P}}_{oob} \gets \mathcal{D}^{\text{P}} \setminus \mathcal{D}^{\text{P}}_{bootstrap}$ \Comment{Held-out \textit{oob} samples}
    \State tree $\gets \textproc{FitDecisionTree}(\mathcal{D}^{\text{P}}_{bootstrap})$ \Comment{Structure and posteriors fit to this data}
    % \State tree $\gets \textproc{FitPosteriors}(tree, \mathcal{D}^{\text{P}}_{bootstrap})$ 
    \For{$X_i$ \textbf{in} $\mathcal{D}^{\text{P}}_{oob}$} \Comment{Posteriors on \textit{oob} samples}
        \State $\hat{p}_b(Y | X_i) \gets \textproc{EstimatePosterior}(tree, X_i)$
    \EndFor
\EndFor
\For{$X_i$ \textbf{in} $\mathcal{D}_n$} \Comment{Average tree-level \textit{oob} posteriors}
    \State $\hat{p}(Y | X_i) \gets \frac1B \sum_{b=1}^B \hat{p}_b(Y | X_i)$
\EndFor
\State $\hat{H}(Y \mid X) \gets \frac{1}{|\mathcal{D}_n|} \sum_{X_i \in \mathcal{D}_n} \big[ - \sum_{y=1}^K \hat{p}(Y = y | X_i) \log \hat{p}(Y = y | X_i)\big]$
\State $\hat{H}(Y) \gets - \sum_{y=1}^K \hat{p}(y) \log \hat{p}(y), \quad \hat{p}(y) = \frac{1}{n} \sum_{i=1}^n \I[Y_i = y]$

\Ensure Mutual information $\hat{H}(Y) - \hat{H}(Y \mid X)$.
\end{algorithmic}
\end{algorithm}

\subsubsection{Honest Estimates}
Due to the honesty property of UF, posteriors are fit using the subset of data $\mathcal{D}^V$, which is independent of the subset of data $\mathcal{D}^P$ used to learn a tree's structure. That is, each tree learns an informative, adaptive partition of the feature space, and the honest posteriors independently estimate the posterior probabilities in each partition. Thus, all of the data $\mathcal{D}_n$ can be used for density estimation without being biased due to overfit, homogeneous leaves, as exists in the other forest methods. The pseudocode is seen in Algorithm \ref{alg:mi_honest}.
% So, the honest conditional entropy averages of posteriors $f_{honest}(x)$ only average over trees for which $x \in |\mathcal{D}^V|$.

\begin{algorithm}[htb!]
  \caption{Honest MI Estimation, no subsampling (see Appendix \ref{sec:code} for further details)}
  \label{alg:mi_honest}
\begin{algorithmic} 
\Require Training set $\mathcal{D}_n$, number of trees $B$, honest subsample size $s$, number of classes $K$
\For{$b$ \textbf{in} B}
    \State $\mathcal{D}^{\text{V}} \gets \textproc{RandomSubset}(\mathcal{D}_n, s)$ \Comment{Honest voters}
    \State $\mathcal{D}^{\text{P}} \gets \mathcal{D}_n \setminus \mathcal{D}^{\text{V}}$ \Comment{Tree fitting subset}

    \State tree $\gets \textproc{FitDecisionTree}(\mathcal{D}^{\text{P}})$
    \State tree $\gets$ $\textproc{FitPosteriors}(tree, \mathcal{D}^{\text{V}})$ \Comment{Set posteriors using honest samples}
    \For{$X_i$ \textbf{in} $\mathcal{D}_n$} \Comment{Posteriors on \textit{honest} samples}
        \State $\hat{p}_b(Y | X_i) \gets$ $\textproc{EstimatePosterior}(tree, X_i)$
    \EndFor
\EndFor
\For{$X_i$ \textbf{in} $\mathcal{D}_n$} \Comment{Average tree-level honest posteriors}
    \State $\hat{p}(Y | X_i) \gets \frac1B \sum_{b=1}^B \hat{p}_b(Y | X_i)$
\EndFor
\State $\hat{H}(Y \mid X) \gets \frac{1}{|\mathcal{D}_n|} \sum_{X_i \in \mathcal{D}_n} \big[ - \sum_{y=1}^K \hat{p}(Y = y | X_i) \log \hat{p}(Y = y | X_i)\big]$
\State $\hat{H}(Y) \gets - \sum_{y=1}^K \hat{p}(y) \log \hat{p}(y), \quad \hat{p}(y) = \frac{1}{n} \sum_{i=1}^n \I[Y_i = y]$

\Ensure Mutual information $\hat{H}(Y) - \hat{H}(Y \mid X)$.
\end{algorithmic}
\end{algorithm}

Note that for all of these estimation strategies, once the posteriors are fit, $\hat{H}(Y \mid X)$ sums over all samples as an approximation of the density. This calculation could just as easily sum over additional unlabeled samples, effectively generalizing to semi-supervised settings.

\subsection{Asymptotic Consistency of Honest Entropy Estimates}\label{sec:consistency_entropy}
Based on the consistency of UF's posterior estimates, proved in Section \ref{sec:consistency_posteriors}, we can now establish consistency of the estimate $\hat{H}_n(Y \mid X) = -\frac1n \sum_{i=1}^n \sum_{y \in \mathcal{Y}} p(y | x_i) \log p(y | x_i)$ of conditional entropy. The proofs to the following results are given in Appendix \ref{sec:proof}.

\begin{lem}[Consistency of the conditional entropy estimate]
    Given Assumptions 1-2, the conditional entropy estimate of UF built to Specification 1 is consistent as $n \rightarrow \infty$, that is, $\hat{H}_n(Y \mid X) \overset{P}{\rightarrow} H(Y \mid X)$.
    \label{thm:cond_ent}
\end{lem}
This result states that the estimate $\hat{H}_n(Y \mid X)$ is arbitrarily close in probability to the true $H(Y \mid X)$ for sufficiently large $n$. From the consistency of the maximum-likelihood estimate $\hat{H}_n(Y)$ and of $H(Y)$ based on empirical frequencies, we have the following key results.

\begin{theorem}[Consistency of the mutual information estimate] Given Assumptions 1-2, the mutual information estimate of UF built to Specification 1 is consistent as $n \rightarrow \infty$, that is, $\hat{I}_n(X;Y) \overset{P}{\rightarrow} I(X; Y)$ as $n \rightarrow \infty$.
\label{thm:mutual_info}
\end{theorem}

\subsection{Empirical Results: Conditional Entropy and Mutual Information Estimation}

We now show the capability of each of the forest methods to estimate conditonal entropy, compare them to the KSG estimators in a variety of simulated settings, and then examine a scenario with real data from the Drosophila larvae biological neural network (connectome).

In all cases, UF used honest estimation with $0.5n$ honest samples per tree (as in GRF \citep{Athey2019-uw}) while each of RF, SigRF, and IRF used both \textit{oob} estimation and sample splitting with $0.3n$ samples held out (approximately the proportion in the \textit{oob} set due to bagging). Each forest used all features in each tree due to signal sparsity and was composed of $300$ trees. Specifically, SigRF and IRF used $5$-fold internal cross validation and $60$ trees per fold

\subsubsection{Simulation: Conditional Entropy}\label{sim:conditional_entropy}

Figure \ref{fig:convergence} demonstrates how the forest methods perform and differ in a simple conditional entropy estimation task. Let $\mathcal{Y} = \{-1, 1\}$ for notational convenience and $P(Y = -1) = P(Y = 1) = 0.5$. $X$ is distributed according to the class-conditional isotropic multivariate Gaussian distribution $\mathcal{N}((Y\mu, 0, \dots, 0)^T, I_d)$, where $\mu$ is a parameter controlling effect size. Unlike the prior posterior estimation experiment where the models' predictions are evaluated on a large test sample, this simulation reflects a situation we would have in practice in which we simply have the data $\mathcal{D}_n$ and wish to estimate the conditional entropy. 

Because each dimension beyond the first is class-independent noise, the conditional entropy does not change. This allows us to compare our forest estimates to the true conditional entropy \citep{Ramdas15} as seen in Figure \ref{fig:convergence} where we examine each forest variant across varying sample size, effect size, and dimension. As the sample size increases, only $UF$ clearly appears to converge. All methods track the truth well as the effect size varies, except for SigRF which is biased throughout. As the dimension increases, all methods remain relatively close to their original estimate, but only UF appears the most accurate and invariant.

\begin{figure}[htb!]
  \centering
  \includegraphics[width=\linewidth]{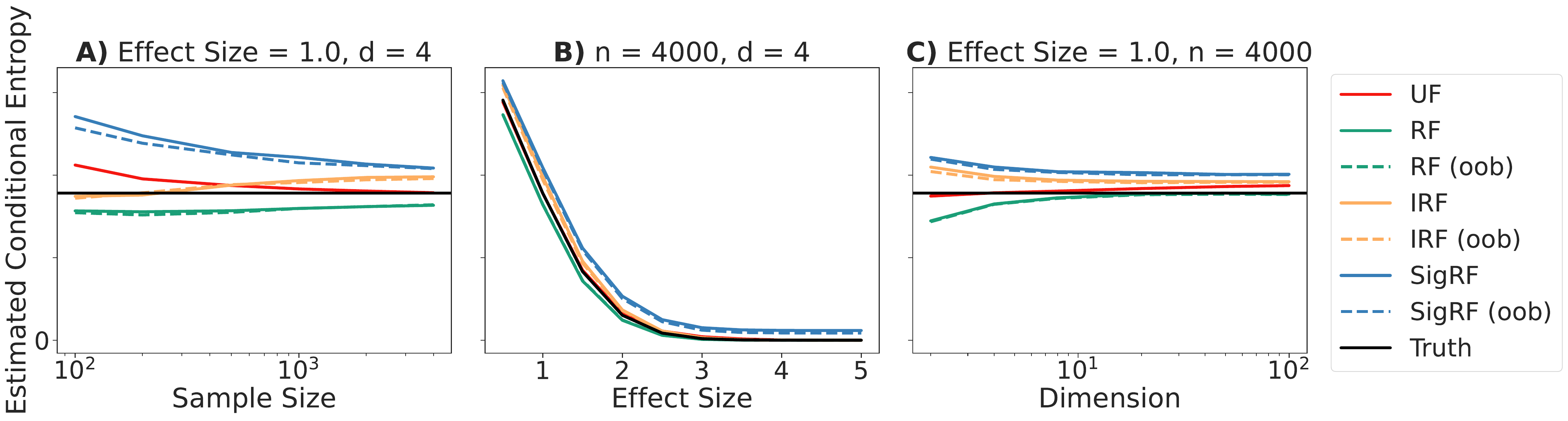}
  \caption{Decision forests estimate conditional entropy well in a variety of regimes. The average of $100$ runs is plotted. \textbf{A)} For $\mu=1$ and $d=4$, as $n$ increases, UF appears to converge while all other methods either don't or are slower. \textbf{B)} As the effect size increases, for fixed $n$ and $d$, all the methods appear to converge to the truth except for the SigRF variants which are biased. \textbf{C)} Even as the number of noise dimensions increases, all the methods closely track the truth, with UF appearing to be the most invariant.}
  \label{fig:convergence}
\end{figure}

\subsubsection{Simulation: Mutual Information Across Settings}\label{sec:mi_sim}
Now that we have established decision forests as adaptive nearest neighbor estimators of conditional information, and thus mutual information, we compare their performance in a variety of simulated settings to the popular $k$-NN estimators KSG \citep{ksg} and Mixed KSG \citep{mixedksg}. Figure \ref{fig:mi_sim} presents normalized mutual information $I(X; Y)/\min\{H(X), H(Y)\}$ estimates (calculated with the known true denominator for scaling purposes) in four different simulated settings as the class prior probabilities, feature dimensionality, and sample size vary. In each simulation setting, only the first one or two dimensions contain information about the class label while all others dimensions are standard normal Gaussian noise. Because the signal dimensions are Gaussian distributed, we can compute the true mutual information \citep{Ramdas15}. The specifics of each simulation in the first two dimensions are described below, all other dimensions being the aforementioned noise.

\paragraph{Overlapping Gaussians} A mixture of two identically distributed Gaussians such that \\$X \sim \mathcal{N}((0, 0)^T, I_2)$. Thus, the labels for each mixture component contain no information, and the true mutual information is 0.

As seen in the first row of Figure \ref{fig:mi_sim}, KSG appears to be approximately unbiased and yield a mutual information close to $0$, while Mixed KSG appears to perform worse in high dimensions and lower sample sizes. The other forest methods appear to experience greater bias but under a permutation test, in which the labels are permuted and a forest is retrained, the UF estimate is indistinguishable from the null distribution of independent labels. We conjecture that the constant bias seen in UF across $n$ is a result of increasingly small leaves (due to overfitting the noise) being sparsely populated by honest posteriors which compose a fixed fraction of $n$. UF lower bounds RF and \textit{oob} RF, however, which provide no calibration and so are even more confident and biased in their predictions.

\paragraph{Separated Gaussians} A mixture of two Gaussians from the same distribution as in Figure \ref{fig:convergence}. The effect size $\mu = 1$ and $\mathcal{Y} = \{-1, 1\}$ such that $X \mid Y=y \sim \mathcal{N}((y\mu, 0)^T, I_2)$.

As seen in the second row of Figure \ref{fig:mi_sim}, UF tracks the truth well over priors and at high dimensions, converging as $n$ gets large. All other forest methods display a noticeable bias and don't readily appear to converge, yet none degrade at high dimensions. In contrast, while the KSG estimators appear unbiased across class priors and converge in $n$, as the number of dimension gets large with added noise features their estimates degrade and converge to zero.

\paragraph{Three Class Gaussians} To simulate performance when there are more two classes, let $\mathcal{Y} = \{0,1,2\}$, and $X \mid Y = y \sim \mathcal{N}(\mb{\mu}_y, I)$
where $\mb{\mu}_0 = (0, \mu)\T, \enskip \mb{\mu}_1=  (\mu, 0)\T, I), \enskip \mb{\mu}_2 = (-\mu, 0)\T$.

As seen in the third row of Figure \ref{fig:mi_sim}, the trends are quite similar to the Separate Gaussian setting except that Mixed KSG seems to converge slightly faster than UF.

\paragraph{Scaled Gaussians} Another mixture of two Gaussians, but one where one Gaussian is scaled and thus non-isotropic.. Let $\mathcal{Y} = \{-1, 1\}$ and  $X \mid Y=y \sim \mathcal{N}\left((y\mu , 0)\T, \Sigma_y\right)$,
where $\Sigma_{-1} =
\begin{bmatrix} 
1/100 & 0 \\
0 & 1 
\end{bmatrix}$, and $\Sigma_{+1} = I_2$.

The Scaled Gaussian experiment highlights one of the key failures of the $k$-NN-based KSG and Mixed KSG estimators. When the data exists on different scales, such as the non-isotropic Gaussian presented here, the $L2$ distance becomes less informative. The forest methods, however, are invariant to monotone transformations of the data, which helps in this setting where the two classes are well separated; it would be even more evident if both Gaussians had equivalent covariances. As seen in the fourth row of Figure \ref{fig:mi_sim}, all forest methods appear robust to added noise dimensions and converge in sample size while UF and IRF additionally appear relatively unbiased. In contrast, the KSG methods show the greatest bias across all class priors, rapidly degrade in performance with added noise features, and are the slowest to converge in sample size.

\begin{figure}[htb!]
  \centering
  \includegraphics[width=\linewidth]{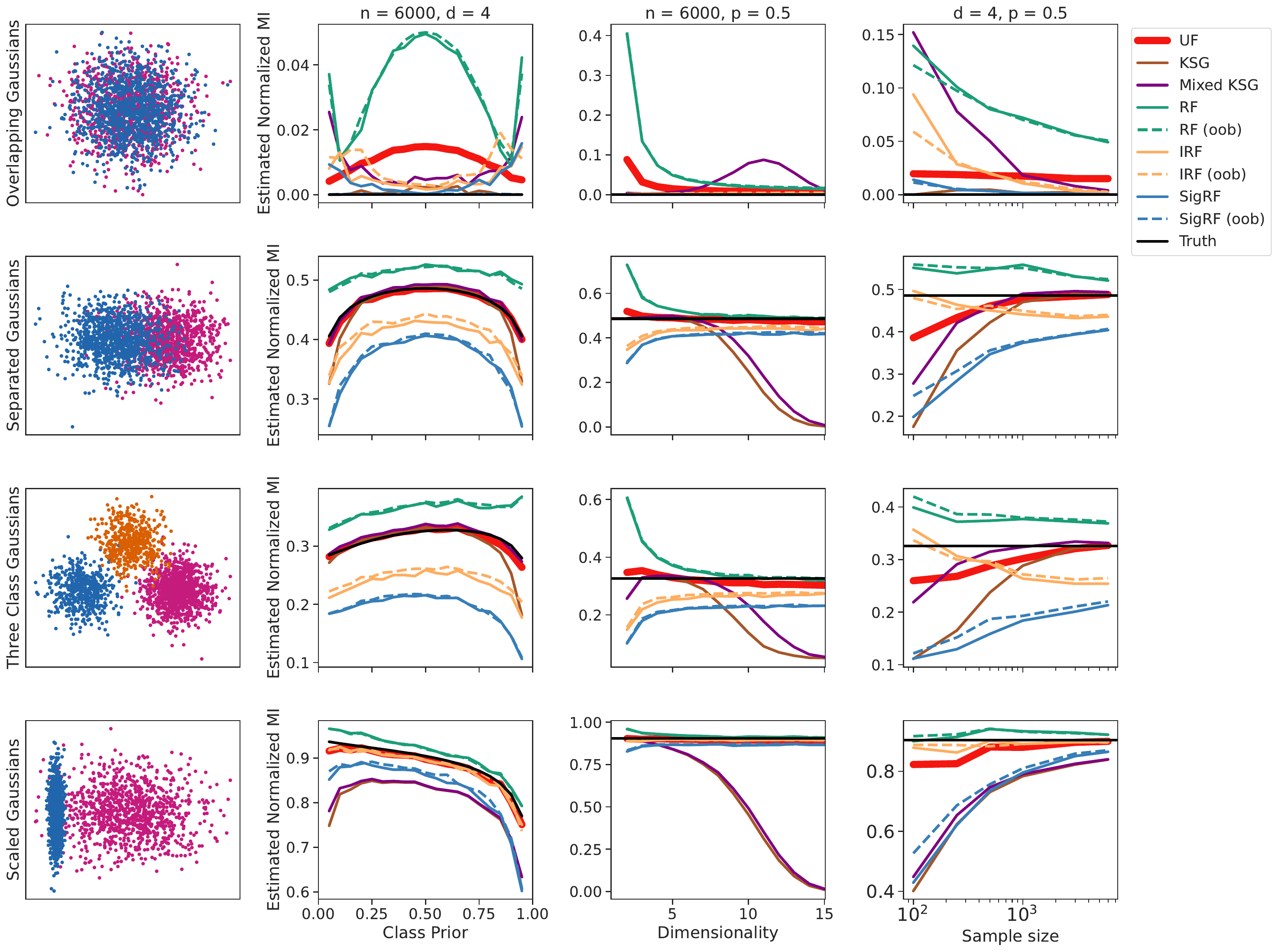}
  \caption{Mutual information estimates of forest methods and KSG methods in four simulated settings.
  (\textbf{Row 1}) The overlapping Gaussians produce zero MI. KSG appears to be zero in all cases while the forest methods display some bias and slow convergence, RF being the worst of all. However, a permutation test with UF fails to reject MI $> 0$.
  (\textbf{Row 2}) When the Gaussians are separated, the KSG methods and UF are all unbiased with UF converging the fastest. All other forest methods display prominent bias but are robust in high dimensions while the KSG methods degrade rapidly with added noise.
  (\textbf{Row 3}) With three separated Gaussian, the results seems to be the same as in the two Gaussian case.
  (\textbf{Row 4}) When the Gaussians are no longer all isotropic, the KSG methods converge much more slowly and degrade much faster with added noise dimensions. The forest methods do well across settings, converging in $n$ and being robust in $d$, with IRF and UF converging the fastest.
  }
  \label{fig:mi_sim}
\end{figure}

Overall, the forest methods are more robust than the KSG methods in higher dimensionality, noisy feature spaces, and when the data deviates from situations where the $L2$ distance is informative. Additionally, UF appears to be the most unbiased and fastest converging of the forest methods, which must be attributed to the honest posteriors. As discussed in Section \ref{sec:cc18}, UF leads to better calibrated posteriors on par with those of SigRF but seemingly worse than IRF. Yet here it is not readily apparent that successful calibration translates directly to successful entropy estimation. It appears that honest estimation provides real benefits while \textit{oob} estimation does not appear to greatly improve upon sample splitting, even at low sample sizes, although it may provide improvement in more complex feature spaces.

\subsubsection{Mutual Information in the Biological Drosophila Neural Network (Connectome)}\label{sec:mi_drosophila}

An immediate application of our random forest estimate of mutual information is measuring the information contained in biological covariates of neurons in the larval \textit{Drosophila} mushroom body (MB) \citep{mb} relative to their neuronal cell type. Part of the complexity of a brain comes from its constituent neurons existing not as independent entities but rather in a complex network structure, the connectome \citep{stat_connectomics}. Extracting meaningful statistics from the connectome, and understanding the relevance of those statistics is an ongoing challenge in computational neuroscience \citep{stat_connectomics}.

The dataset we examine here, obtained via serial section transmission electron microscopy, provides a real and important opportunity for investigating synapse-level structural connectome modeling \citep{Vogelstein2019-om}. This connectome consists of 213 neurons ($n = 213$) in four distinct cell types: Kenyon cells, input neurons, output neurons, and projection neurons. The connectome adjacency matrix is visualized in Figure \ref{fig:app} (left). Each neuron comes with a mixture of categorical and continuous features: ``claw'' refers to the integer number of dendritic claws for Kenyon cells, ``dist'' refers to real distance from the neuron to the neuropil, ``age" refers to neuron (normalized) age as a real number between -1 and 1, and ``cluster" refers to the community detected by the latent structure model as in \citet{lsm}, a function of the connectome adjacency matrix. We wish to determine if the latent structure model successfully captures topological information relevant to the neuronal cell type and, if it does, how that information relates to the other cell features.

\begin{figure}
  \centering
  \raisebox{-0.425\totalheight}{\includegraphics[width=0.3\linewidth]{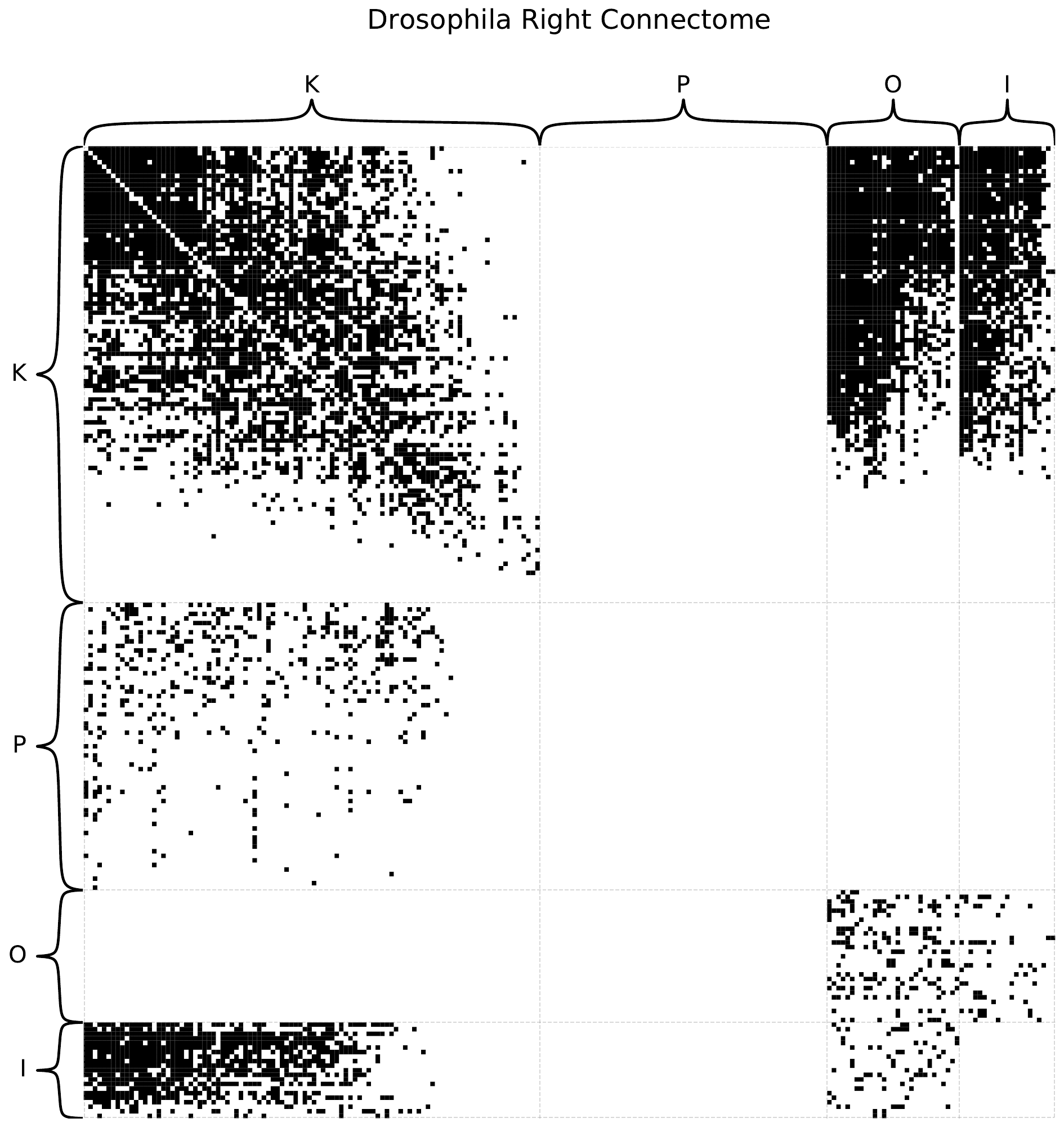}}
  \begin{tabular}{|c | c c c|}
    \hline
    $X_{\text{in}}$      & $\hat{I}(Y, X_{\text{in}})$           & $\hat{I}(Y, X_{\text{out}} \mid X_{\text{in}})$     & $\hat{I}(Y,X)$            \\
    \hline
    claw              & {\bf 0.362}      & 0.827  & 1.189 \\
    dist              & {\bf 0.542}      & 0.647  & 1.189 \\
    age               & {\bf 0.692}      & 0.497  & 1.189 \\
    cluster           & {\bf 1.102}      & 0.087  & 1.189 \\ \hline 
    cluster, claw      & 1.104   & 0.084  & 1.189 \\
    cluster, dist      & 1.121   & 0.068  & 1.189 \\
    cluster, age       & 1.189   & 0.000  & 1.189 \\
    % claw, dist, age    & 0.690   & 0.499  & 1.189 \\
    cluster, claw, dist & 1.119  & 0.070  & 1.189 \\
    cluster, claw, age  & 1.189  & 0.000  & 1.189 \\
    cluster, dist, age  & 1.189  & 0.000  & 1.189 \\
    \hline
  \end{tabular}
  \caption{\textbf{Left} \textit{Drosophila} larva right hemisphere connectome. Groups of neurons are labelled $K$ for Kenyon Cells, $I$ for input neurons, $O$ for output neurons, and $P$ for projection neurons. Black cells represent the presence of an edge between the two corresponding nodes.
  \textbf{Right} Adjacency spectral embedding applied to the larval \textit{Drosophila} mushroom body (MB) connectome shows informative cluster groups for each neuron type, as seen by the large mutual information values. This suggests a strong dependency between neuron type and its structural position in the connectome.}
  \label{fig:app}
\end{figure}

We compute mutual information with $Y$ as the neuron type and $X$ as various subsets of the features. We apply UF as a robust estimator, as evident in the simulations. Because neuron type has been a subjective categorical assignment based on gross morphological features, we expect mutual information to be high for the entire feature vector. Indeed, UF estimated the mutual information to be $\hat{I}_n(Y, X) = 1.189$, statistically significantly greater than zero ($p$-value $< 0.001$)  under a permutation test of the categorical labels with 1000 re-trained forest replicates. Regarding the individual features, scientific prior knowledge posits a few relationships that are confirmed by the mutual information estimates. We compute $\hat{I}(Y, X_{\text{in}})$, the mutual information between $Y$ and the ``in" features $X_{\text{in}}$, and $\hat{I}(Y, X_{\text{out}} \mid X_{\text{in}})$, the additional information given by the ``out" features, per the chain rule of mutual information $\hat{I}(Y, X_{\text{in}}) + \hat{I}(Y, X_{\text{out}} \mid X_{\text{in}}) = \hat{I}(X,Y) $. The results are shown in Figure \ref{fig:app} (right) where we can see the information contained in the latent structure model ``cluster" feature and how it relates to other cell features.

Because only Kenyan cells have dendritic claws, this feature is unable to discriminate between the other three cell types, explaining why it has the lowest mutual information estimate among the features. 
Age is typically computed using distance from the neuropil, which explains why the ``age" and ``dist" features yielded relatively similar information regarding $Y$. Figure 4 of \citet{lsm} presents compelling evidence of latent structure model clusters being closely related to cell type, which is corroborated by its highest mutual information estimate among neuron features. Moreover, we observe that ``cluster" and ``age" together provide a unique minimal sufficient statistic encoding all the information in the data. That is, the ``claw" and ``dist" features provide no additional information on a neuron's cell type once the connectome and ``age" are known and provide insufficient information to replace either of these other features. Oftentimes it can be difficult to interpret features derived by complex methods from the high-dimensional connectome, yet UF here has demonstrated that the latent structure model results are indeed informative and subsume the information on neuronal cell type provided by the ``claw" and ``dist" features.

\section{Conclusion}
We presented a suite of decision forest methods as adaptive nearest neighbor estimators of conditional entropy and mutual information and as alternatives to the popular and established $k$-nearest neighbor KSG methods. The forest methods empirically performed well in high-dimensional settings and in cases where the $L2$ distance metric was not useful, both cases where the non-adaptive KSG methods were inadequate. Additionally, these methods maintained strong performance in low-dimensional settings and across sample sizes. Of the methods explored, Uncertainty Forest (UF), which uses the concept of \textit{honesty} to remove the bias from learned posteriors, provided the most reliable entropy estimates across settings and is the only forest method here with asymptotic convergence guarantees. Applied to a neuroanatomical dataset of the \textit{Drosophila} larva connectome, UF was able to provide informative, statistically significant mutual information estimates between various biological regions and \textit{a priori} labels, as well as validate features derived from the complex connectome structure.

As part of applying \textit{honesty} for conditional entropy and mutual information estimation, we presented a novel study of UF and other forest methods in the context of posterior probability estimation, evaluated on simulated and real datasets. In simple simulated settings, UF dominates when there is a sharp change in the posterior probabilities while the other forest methods overfit. Additionally, on par with IRF, UF provides better calibrated posteriors than RF and SigRF in regions of high uncertainty between classes. On the 72 CC18 classification tasks, we see that UF improves upon RF in terms of the calibration of posteriors at the cost of pure predictive power.

The main limitation of decision forests for conditional entropy and mutual information estimation is that the methods are only able to estimate uncertainty quantities for categorical $Y$; that said, the algorithm can be modified for continuous $Y$ as well.
Computing the posterior distribution when $Y$ is continuous can be accomplished with a kernel density estimate instead of simply binning the probabilities. A theoretical and empirical analysis of this extension is of interest. When $Y$ is multivariate, a heuristic approach such as subsampling $Y$ dimensions or using multivariate random forests could be explored. On the theoretic results, further important next steps also include rigorous proofs for convergence rates. On the applied results, feature selection in machine learning settings and dependence testing in high-dimensional, nonlinear scientific datasets will be natural applications for these decision forest methods. Fast permutation testing approaches will be necessary for efficient inference, especially in larger datasets.

\paragraph{Data and Code Availability Statement}
\label{sec:repo}

The implementation of UF, the simulated and real data experiments, and their visualization can be reproduced completely with the code available on Github (see https://neurodata.io/code/). %\href{https://neurodata.io/code/}{GitHub}.

\nocite{*}

\section*{Acknowledgements}

The authors are grateful for the support by the XDATA program of the Defense Advanced Research Projects Agency (DARPA) administered through Air Force Research Laboratory contract FA8750-12-2-0303, and DARPA's Lifelong Learning Machines program through contract FA8650-18-2-7834, and the National Science Foundation award DMS-1921310.

\vspace{5mm}
\bibliography{nd_template}
\bibliographystyle{unsrtnat}

\setcounter{theorem}{0}
\setcounter{lem}{0}
\setcounter{assumption}{0}
\clearpage
\appendix

 For additional pseudocode see Appendix \ref{sec:code}. For proofs and details of theory, see Appendix \ref{sec:proof}. For posterior prediction experiments, see Appendix \ref{sec:supplemental}.
\section{Pseudocode}
\label{sec:code}
The \textproc{FitDecisionTree},
%\textproc{GetInBagSamples},
and \textproc{ApplyTree} operations, as well as the \textproc{NumClasses} and \textproc{NumLeaves} fields are all standard functions in the \texttt{scikit-learn} decision tree and bagging classifier modules \citep{scikit-learn}. The following algorithm pseudocode complements the pseduocode in Section \ref{sec:mi_estimation}.

\begin{algorithm}
   \caption{Fitting Honest Posteriors}
   \label{alg:fit_posterior}
\begin{algorithmic} 
\Require Fitted tree, data to fit honest posteriors to $\mathcal{D}^V$
\Ensure tree with overwritten posterior probabilities
\Function{FitPosteriors}{tree, $\mathcal{D}^V$}
    \State $K$ = tree.\textproc{NumClasses}.
    \State $L$ = tree.\textproc{NumLeaves}.
    \State vote\_counts = $[0]^{L \times K}$.
    \For{observation $(x, y)$ \textbf{in} $\mathcal{D}^V$}
        \State $l$ = tree.\textproc{ApplyTree}($x$)
        \State vote\_counts[$l$, $y$] = vote\_counts[$l$, $y$] + 1.
    \EndFor
    
    \State posterior = $[0]^{L \times K}$.
    \For{leaf index $l$ \textbf{in} $[L]$}
        \State leaf\_size = $\sum_{y \in [K]}$ vote\_counts[$l$, $y$]
        \For{class $y$ \textbf{in} $[K]$}
            \State posterior[$l$, $y$] = vote\_counts[$l$, $y$] / leaf\_size
        \EndFor
    \EndFor
    tree.posterior = posterior
    \State \Return tree
\EndFunction
\end{algorithmic}
\end{algorithm}

\begin{algorithm}
   \caption{Predicting Posterior Probability}
   \label{alg:eval_posterior}
\begin{algorithmic} 
\Require fitted tree, evaluation sample $x$
\Ensure Posterior probabilities
\Function{EstimatePosterior}{tree, $x$}
    \State $K$ = tree.\textproc{NumClasses}.
    \State $l$ = tree.\textproc{ApplyTree}($x$).
    posterior = $[0]^K$
    \For{class $y$ \textbf{in} $[K]$}
        \State posterior[$y$] = tree.posterior[$l$, $y$].
    \EndFor
    \State \Return posterior
\EndFunction
\end{algorithmic}
\end{algorithm}

\clearpage
\section{Proofs}\label{sec:proof}

In this section, we present consistency results regarding estimation of conditional entropy and mutual information via Uncertainty Forest. The argument follows as a nearly direct consequence of \citet{Athey2019-uw}, who demonstrate that for some estimand $\theta(x)$ defined as the solution to a locally weighted estimating equations satisfying Assumptions 1A-6A (listed below), random forests built to Specification 1 provide plug-in estimates $\hat \theta(x)$ for $\theta(x)$ that are consistent and asymptotically Gaussian.

\subsection{Proof of Lemma \ref{thm:prob} (Consistency of the Posterior Probabilities)}

In order to apply the results of \citet{Athey2019-uw}, we need to first express UF's posterior probability estimand $\theta(x) = P(Y = k | X = x)$, for any $k \in \mathcal{Y}$ and all $x \in \mathcal{X}$, as a solution to a locally weighted estimating equation $M_\theta(Y) := \E[\psi_{\theta}(Y) | X = x] = 0$ where $\psi_\theta$ is some score function. To simplify notation, we suppress the dependency of $x$ and $k$ in $\theta(x)$ and simply write $\theta$ wherever this dependency can be deduced from the context. 

By equivalently reframing the estimand as the conditional mean $\theta = \E[\I[Y = k] | X = x]$, where $\I$ is the indicator function equal to $1$ when its argument is true and 0 otherwise, we can define
\begin{align*}
    M_\theta(Y) = \E[\I[Y = k] \mid X = x] - \theta(x) \quad\text{ and }\quad \psi_\theta(Y) = \I[Y = k] - \theta
\end{align*}

Having expressed the estimand appropriately, we must now address Assumptions 1A-6A \citep{Athey2019-uw}. We enumerate them below and address them inline.
\begin{enumerate}
    \item[1A.] For fixed values $\theta(x)$, we assume that $M_\theta(x)$ is Lipschitz continuous in $x$.
    This remains a true assumption on the distribution and is reframed as Assumption \ref{assmp:lipshitz}.

    \item[2A.] When $x$ is fixed, we assume that $M_\theta(x)$ is twice continuously differentiable  in $\theta$ with a uniformly bounded second derivative, and that $\frac{\partial}{\partial(\theta)} M_\theta(x) \mid_{\theta(x)} \neq 0$ for all $x \in \mathcal{X}$.
    This is clearly true when we evluate the derivatives $\frac{\partial^2 M_\theta}{\partial \theta^2} = 0$ and $\frac{\partial}{\partial \theta}M_\theta = -1$.

    \item[3A.] The worst-case variogram of $\psi_\theta (Y)$ is Lipschitz-continuous in $\theta(x)$.
    This is trivially true as seen by the expansion
    \begin{align*}
        \sup_{x \in \mathcal{X}}(\text{Var}[\psi_\theta(Y) - \psi_\theta(Y)]) &= \sup_{x \in \mathcal{X}}(\text{Var}[\I[Y = y] - \theta - (\I[Y = y] - \theta')])
        &= \sup_{x \in \mathcal{X}}(\text{Var}[\theta' - \theta'])
        &= 0,
    \end{align*}

    \item[4A.] The $\psi$-functions can be written as $\psi_{\theta}(Y) = \lambda(\theta(x); Y) + \xi_{\theta(x)}(g(Y))$, such that $\lambda$ is Lipschitz-continuous in $\theta$, $g: \{Y\} \rightarrow \Real$ is a univariate summary of $Y$, and $\xi_{\theta(x)} : \Real \rightarrow \Real$ is any family of monotone and bounded functions.
    The function $\psi_\theta(Y)$, a function of $Y$  and $\theta$, is itself Lipschitz in $\theta$.

    \item[5A.] For any weights $\alpha_i(x)$ such that $\sum_i \alpha_i(x) = 1$, the estimation equation returned a minimizer $\hat{\theta(x)}$ that at least approximately solves the estimating equation $||\sum_{i=1}^n \alpha_i(x) \psi_{\hat\theta} (Y_i)||_2 \leq C max\{\alpha_i(x)\}$ for some constant $C \geq 0$.
    The solution of
    \begin{align*}
        \sum_{i=1} \alpha_i(x) \psi_{\theta}(Y_i) = 0
    \end{align*}
    in $\theta$ exists, and is equal to
    \begin{align*}
        \hat{\theta}(x) = \sum_{i=1}^n \alpha_i(x) \I[Y_i = k]
    \end{align*}

    \item[6A.] The score function $\psi_{\theta} (Y)$ is a negative sub gradient of a convex function, and the expected score $M_\theta(x)$ is the negative gradient of a strongly convex function.
    Choose
    \begin{align*}
        \bf{\Psi}(\theta) &= \frac{1}{2}(\I[Y = y] - \theta)^2,\\
        \bf{M}(\theta) &= \frac{1}{2}(p(y \mid x) - \theta)^2
    \end{align*}
    as these functions.
\end{enumerate}

The $\alpha_i(x)$'s  weigh highly observations with $X_i$ close to $x$, as to approximate the conditional expectation of $\psi_\theta(Y)$, or $M_\theta(Y)$. In a random forest, these weights are defined to be the empirical probability that the test point $x$ shares a leaf with each training point $X_i$. As pointed out in \cite{Athey2019-uw}, in the case of conditional mean estimation, $\hat \theta$ is equivalent to canonical average of estimates over trees, the estimator of UF. As we have satisfied or kept Assumptions 1A-6A by construction, $\hat \theta$ is a consistent estimate of $\theta$ \citep{Athey2019-uw} and so is UF.

\subsection{Proof of Lemma \ref{thm:cond_ent} (Consistency of the conditional entropy estimate)}

By Lemma \ref{thm:prob} and continuity, we prove consistency of the honest forest estimate of conditional entropy. First, we prove an intermediate results. Let $\hat{H}_n(Y \mid X = x) = -\sum_{y \in [K]} \hat{p}_n(y \mid x) \log \hat{p}_n(y \mid x)$.

\begin{corollary} For each $x\in \Real^d$, we have that $\hat{H}_n(Y \mid X = x) \overset{P}{\rightarrow} H(Y \mid X = x).$
\label{thm:no_finite}
\end{corollary}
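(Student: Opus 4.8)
The plan is to obtain this as an immediate consequence of Lemma~\ref{thm:prob} combined with continuity of the Shannon entropy functional. First I would record the elementary fact that the map $h \colon \Delta^{K-1} \to \Real$ defined on the probability simplex by $h(\mathbf{p}) = -\sum_{y \in [K]} p_y \log p_y$, with the standard convention $0 \log 0 = 0$, is continuous on all of $\Delta^{K-1}$ — in particular on the boundary faces where some coordinates vanish — because the scalar map $t \mapsto -t \log t$ extends continuously to $[0,1]$ with value $0$ at $t = 0$, and a finite sum of continuous functions is continuous. This point is worth isolating, since the uncorrected estimates $\tilde{p}_n(y \mid x)$ may genuinely equal zero for finite $n$, so we really do need continuity up to the boundary.

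Next, fix $x \in \Real^d$ and view the (finite-sample-uncorrected) posterior as the random vector $\tilde{\mathbf{p}}_n(x) = (\tilde{p}_n(1 \mid x), \dots, \tilde{p}_n(K \mid x)) \in \Delta^{K-1}$, with limit vector $\mathbf{p}(x) = (p(1 \mid x), \dots, p(K \mid x))$. Lemma~\ref{thm:prob} gives $\tilde{p}_n(y \mid x) \overset{P}{\rightarrow} p(y \mid x)$ for each coordinate $y \in [K]$; since $K$ is finite, a union bound over the $K$ coordinates upgrades this coordinatewise convergence in probability to joint convergence of the vectors, $\tilde{\mathbf{p}}_n(x) \overset{P}{\rightarrow} \mathbf{p}(x)$.

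Finally, I would apply the continuous mapping theorem for convergence in probability to the continuous map $h$, concluding $\tilde{H}(Y \mid X = x) = h(\tilde{\mathbf{p}}_n(x)) \overset{P}{\rightarrow} h(\mathbf{p}(x)) = H(Y \mid X = x)$, which is exactly the claim. If one prefers an explicit argument in place of invoking the continuous mapping theorem, the same conclusion follows from uniform continuity of $h$ on the compact set $\Delta^{K-1}$: given $\eps > 0$ choose $\delta > 0$ so that $\|\tilde{\mathbf{p}}_n(x) - \mathbf{p}(x)\|_\infty < \delta$ implies $|\tilde{H}(Y \mid X = x) - H(Y \mid X = x)| < \eps$, and note that the probability of the former event tends to $1$ by the previous paragraph. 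There is essentially no real obstacle in this argument; the only step demanding a moment of care is the boundary continuity of $h$, handled as above.
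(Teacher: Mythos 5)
Your argument is correct and follows the same route as the paper's proof: continuity of the entropy functional $\sum_{k} h(p_k)$ on the full simplex (including the boundary, via the convention $0\log 0 = 0$), combined with Lemma~\ref{thm:prob} and the continuous mapping theorem. Your explicit upgrade from coordinatewise to joint convergence in probability via a union bound over the finite set $[K]$ is a small step the paper leaves implicit, but it is the same proof in substance.
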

\begin{proof}
    The function 
    \begin{align*}
        h(p) = \begin{cases}
        0 &\text{ if } p = 0\\
        -p \log p &\text{ otherwise}
        \end{cases}
    \end{align*}
    is continuous on $[0,1]$. Similarly, the finite sum $\sum_{k=1}^K h(p_k)$ is continuous on $\{(p_1, ..., p_K) : 0 \leq p_k \leq 1, \sum_{k=1}^K p_k = 1\}$. Thus, by Lemma \ref{thm:prob} and the continuous mapping theorem, we have the desired result.
\end{proof}

Now to prove Lemma \ref{thm:cond_ent}, we wish to show convergence in probability, and thus that $\forall \varepsilon, \delta > 0$, $\exists N > 0$ such that
$P(|\hat H_n(Y \mid X) - H(Y \mid X)| > \varepsilon) < \delta$
for all $n \geq N$, equivalently denoted by the limit taken in probability
$$\underset{n\to\infty}{\text{plim}} \bigg|\hat H_n(Y \mid X) - H(Y \mid X)\bigg| = 0.$$
The conditional entropy estimate is defined as $\hat{H}_{n}(Y \mid X) = \frac{1}{n} \sum_{i \in \mathcal{D}_n} \hat{H}_n(Y \mid X = X_i)$, and by Corollary \ref{thm:no_finite} we have the pointwise consistency of $\hat{H}_n(Y \mid X = x)$ for $H(Y \mid X = x)$.

Because of the finiteness of $[K]$, the entropy-like term $|\hat{H}_n(Y \mid X = x)|$ is bounded by $\log K$ for all $n$ and $x$. Thus the mean
$$\E_{X'}[\hat{H}_n(Y \mid X = X')] = \int_{x \in \Real^d} \hat{H}_n(Y \mid X = x) \ dF_X$$
of the finite sample estimator exists.
By the triangle inequality, we have that
\begin{align*}
    \bigg|\frac1n \sum_{i \in \mathcal{D}_n} \hat{H}_n(Y \mid X = X_i) - H(Y \mid X)\bigg| 
    &\leq \bigg|\frac1n \sum_{i \in \mathcal{D}_n} \hat{H}_n(Y \mid X = X_i) - \E_{X'}[\hat{H}_n(Y \mid X = X')]\bigg|\\
    &\quad+\bigg|\int_{x \in \Real^d} \hat{H}_n(Y \mid X = x) \ dF_X - H(Y \mid X)\bigg|
\end{align*}

With regards to the first term on the right, by the Weak Law of Large Numbers (WLLN)
\begin{align*}
    \underset{n\to\infty}{\text{plim}} \bigg|\frac1n \sum_{i \in \mathcal{D}_n} \hat{H}_n(Y \mid X = X_i) - \E_{X'}[\hat{H}_n(Y \mid X = X')]\bigg| = 0.
\end{align*}

% We can confirm the pointwise conditional entropy estimates $\hat{H}_n(Y \mid X = X_i)$ are pairwise independent, being functions of the data only through the estimated posteriors, and thus the WLLN conditions are valid. Posteriors are correlated only if there iftwo samples $x_i, x_j$, $i \neq j$ fall into. Of course, if $y_i \neq y_j$, 
With regards to the second term on the right side, since the conditional entropy is bounded and converges pointwise by Corollary \ref{thm:no_finite}, by the Dominated Convergence Theorem
\begin{align*}
    &\underset{n\to\infty}{\text{plim}} \bigg|\int_{x \in \Real^d} \hat{H}_n(Y \mid X = x) \ dF_X - H(Y \mid X)\bigg| \\
    =& \underset{n\to\infty}{\text{plim}} \bigg|\int_{x \in \Real^d} \hat{H}_n(Y \mid X = x) \ dF_X - \int_{x \in \Real^d} H(Y \mid X = x) \ dF_X \bigg| \\
    =& \underset{n\to\infty}{\text{plim}} \bigg|\int_{x \in \Real^d} \hat{H}_n(Y \mid X = x) - H(Y \mid X = x) \ dF_X \bigg|=0.
\end{align*}

Thus our result
\begin{align*}
    \underset{n\to\infty}{\text{plim}} \bigg|\frac1n \sum_{i \in \mathcal{D}_n} \hat{H}_n(Y \mid X = X_i) - H(Y \mid X)\bigg| = 0
\end{align*}
follows as the lower bound of the sum of the prior two vanishing quantities.

\subsection{Proof of Theorem \ref{thm:mutual_info} (Consistency of the mutual information estimate)}

For $i.i.d.$ observations of $Y_i$, it is clear that $\hat{H}_n(Y)$ is a consistent estimate for $H(Y)$. By Lemma \ref{thm:cond_ent} we have the consistency of $\hat{H}_n(Y \mid X)$ for $H(Y \mid X)$. Thus, the consistency of $\hat{I}_n(X,Y)$ follows immediately as the sum of two consistent terms.
\clearpage
\section{Supplemental Results}
\label{sec:supplemental}

To our knowledge, honest sampling for random forest classification and posterior estimation has not been explored before, despite the recent prevalence of honest regression forests. To supplement our theory and use of honest posteriors for entropy estimation, here we provide a more detailed exploration of honesty in the context of classification.

\subsection{Simulation: Steep Posteriors}
\label{sec:supp_steep}

The details of this simulation are fully detailed in Section \ref{sim:steep_posteriors}. Additional plots here are shown in Figure \ref{fig:supp_steep_hellinger} for Hellinger distance and Figure \ref{fig:supp_steep_heatmap} for estimated posteriors across additional values of $\alpha$. Each of the decision forest methods were composed of $500$ trees, $5$ folds of $100$ trees for IRF and SigRF, and used all of the features in each split due to the sparsity of the signal. Here, UF used $0.63$ of the samples to learn the tree structure to better equate to the number of unique samples used during bootstrapping, although our experiences indicated that the larger value was not needed in this relatively simple feature space.

\begin{figure}[!htb]
  \centering
  \includegraphics[width=0.49\linewidth]{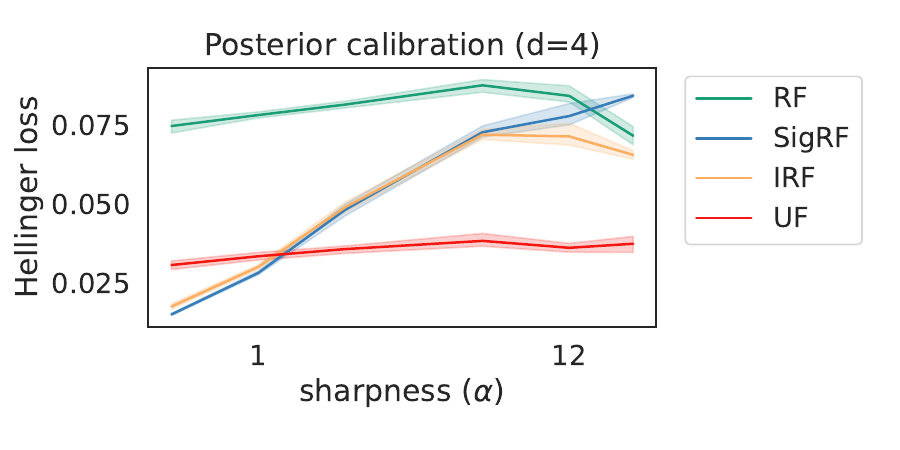}
  \includegraphics[width=0.49\linewidth]{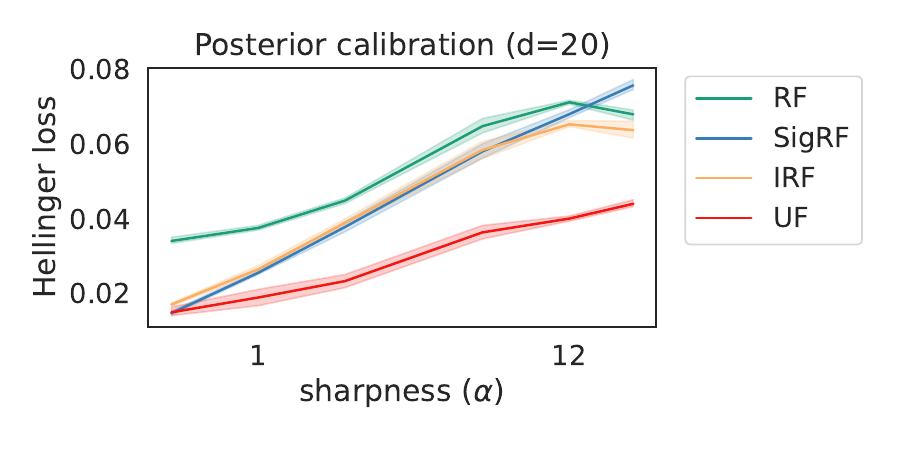}
  \caption{UF dominates RF, SigRF, and IRF in Hellinger distance to the true posterior of this simulated example across most transition sharpness parameter values of $\alpha$ in dimension $d=4$ (Left) and across all values of $\alpha$ in dimensions $d=20$ (Right).}
  \label{fig:supp_steep_hellinger}
\end{figure}

\begin{figure}%[!htb]
  \centering
  \includegraphics[width=\linewidth]{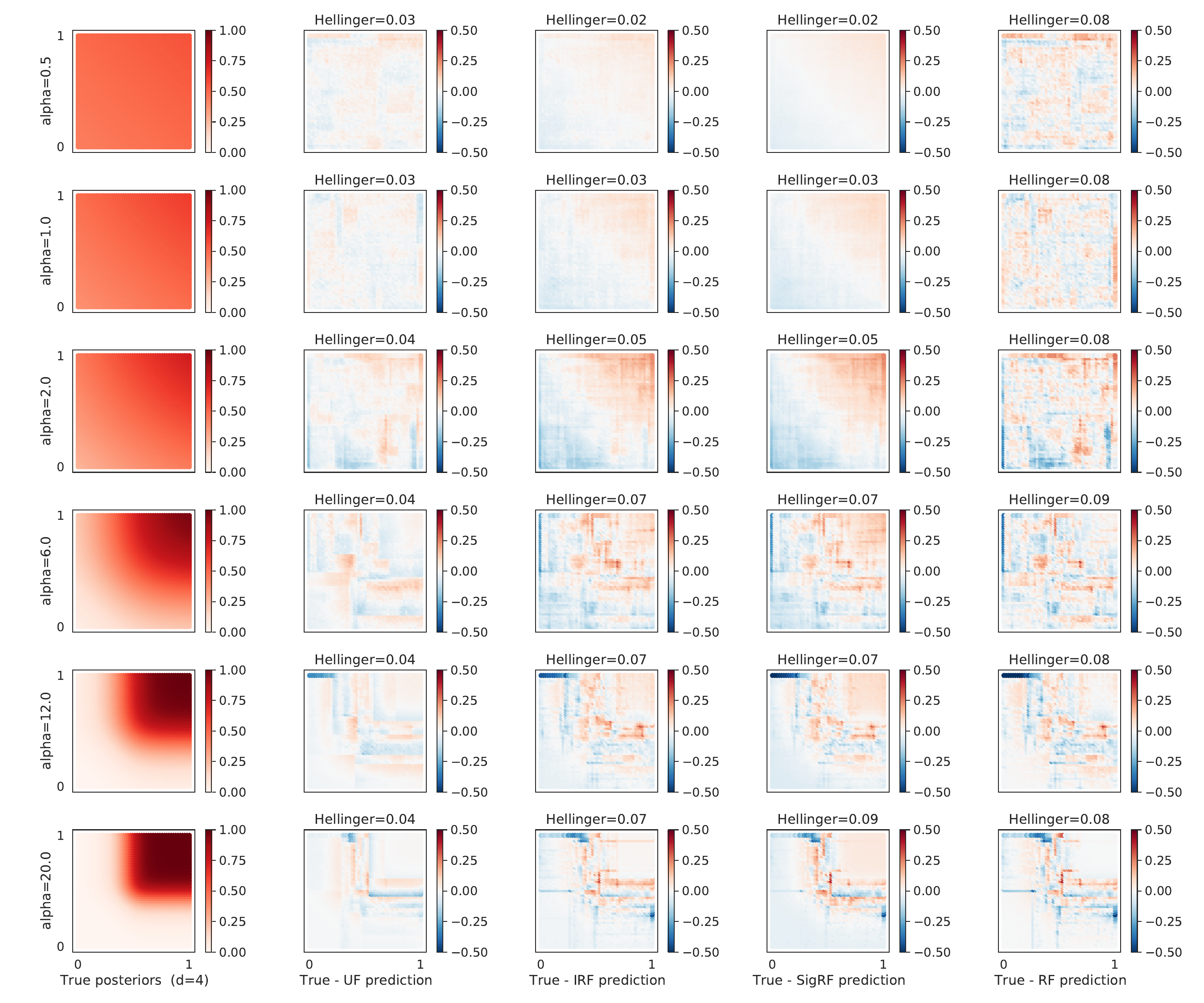}
  \caption{As $\alpha$ increases (down the rows), the simulated posteriors in the leftmost column become more sharp. We see in each of the subsequent columns the difference between the true posterior and the forest posterior over a grid in the first two dimensions. UF appears to do the best at reducing error near the posterior transition and near the boundary of the feature space.}
  \label{fig:supp_steep_heatmap}
\end{figure}
% \vspace{-10pt}
% \clearpage

\subsection{OpenML CC18 Classification Tasks: Accuracy and Calibration}
\label{sec:supp_cc18}
We additionally evaluated the decision forest posteriors on the 72 CC18 classification datasets from OpenML \citep{feurer_openml-python_2019, vanschoren_openml_2014} to complement our simulations. Each of the forests were trained as in the prior Section \ref{sec:supp_steep} except since the amount of signal vs noise is unknown, each forest searched over $0.33$ of the features per split node. Datasets with missing values were imputed using either the median (in the case of continuous data) or the mode (in the case of categorical data) and all categorical variables were subsequently one-hot-encoded. Each of the datasets were fit and evaluated using 5-fold cross validation and the average scores are shown in our results. Because we care both about prediction accuracy and calibration, we evaluate each forest using the following metrics.
A test dataset with $m$ observations in $K$ classes has true labels $y = \{y_i\}_{i=1}^m$ and a decision forest predicts posteriors $p_i(y = k)$ for $i \in \{1,\cdots, m\}$ and $k \in [K] := \{1, \cdots, K\}$. The predicted class $\hat{y} = \{\argmax_{k \in [K]} p_i(y = k)\}_{i=1}^m$ has the largest posterior probability, denoted $\hat{p}_i$.

\begin{description}
\item[Cohen's kappa loss \citep{mchugh_interrater_2012}:] Cohen's kappa measures the agreement between the true and predicted labels but accounts for the accuracy of random guessing unlike $01$-loss. We scale the typical Cohen's kappa score by $-1$ so that it is minimized by a perfect algorithm. The resulting loss is defined as
$-\frac{p_0 - p_e}{1 - p_e}$
where $p_0$ is the $01$-accuracy and $p_e$ is the chance accuracy. See Figure \ref{fig:ck_wilcoxon}  for the pairwise forest comparison across CC18 datasets using the Wilcoxon Sign test and Figure \ref{fig:supp_ck_stripplot} for the raw Cohen\'s kappa losses per dataset.

\item[Expected Calibration Error (ECE) \citep{guo_calibration_2017}:] ECE scores how well the posterior of the predicted class equals an estimate of the true posterior, since it is generally unknown. The test samples are divided into $L$ bins of width $1/L$ on $(0, 1]$ where bin $B_l = \{i: \hat{p}_i \in (\frac{1-l}{L}, \frac{l}{L}]\}, l \in [L]$. The bin accuracy is $acc(B_l) = \frac{1}{|B_l|} \sum_{i \in B_l} \II[\hat{y}_i = y_i]$. The bin confidence is $conf(B_l) = \frac{1}{|B_l|} \sum_{i \in B_l} \hat{p}_i$ and equals the corresponding bin accuracy in a perfectly calibrated model. So $ECE = \sum_{l \in [L]} \frac{|B_l|}{m} |acc(B_l) - conf(B_l)|$ measures the weighted average difference. See Figure \ref{fig:ece_wilcoxon} for the pairwise forest comparison across CC18 datasets using the Wilcoxon Sign test and Figure \ref{fig:supp_ece_stripplot} for the raw ECEs per dataset using 20 bins.

\item[Maximum Calibration Error (MCE) \citep{guo_calibration_2017}:] MCE scores the worst disagreement between the bin confidence and bin accuracy defined as for ECE. Formally, $MCE = \max_{l \in [L]} |acc(B_l) - conf(B_l)|$. An important caveat is that MCE does not account for the sizes of the bins and so may be sensitive to noisy, small bins. See Figure \ref{fig:mce_wilcoxon}  for the pairwise forest comparison across CC18 datasets using the Wilcoxon Sign test and Figure \ref{fig:supp_mce_stripplot} for the raw MCEs per dataset using 20 bins.

\end{description}

Note that Hellinger distance is not a possible metric here as the true posteriors are not known. Additionally, the ECE and MCE scores and plots evaluate only the posterior of the predicted class to be applicable to tasks with any number of classes. This is in contrast to the two-class version of ECE used in \citet{niculescu-mizil_predicting_2005}. ECE and MCE are also approximations that use simple histogram binning to estimate densities, in comparison to regression based smoothing estimators of these quantities \citep{austin_integrated_2019}. In practice, while regression based smoothers may be more accurate, for large test samples the ECE and MCE approximations are not noticeably different but computed much more quickly.

ECE can be better visually explored by plotting the bin accuracy against the bin confidence. An ECE of 0 means that each point would lie on the line $y=x$. The fraction of samples in each of the bins approximates the density of confidences. These visualization, for each of of the CC18 datasets, are plotted in Figures \ref{fig:supp_ece_density_1} and  \ref{fig:supp_ece_density_2}.

The key takeaways seem to be as follows. In terms of accuracy, RF performs the best, IRF and SigRF are approximately the same, which makes sense as they only differ in the post-training calibration, and UF performs the worst, as would be expected from its splitting of data. In terms of the calibration metrics ECE and MCE, IRF is the clear winner. Its adjustment to the posteriors is meant to minimize these metrics and is more flexible than SigRF. The next most calibrated method actually appears to be UF, which is significantly different than RF, unlike SigRF. Intuitively, it makes sense that UF is well calibrated as the honest posteriors in a single tree are independently and identically distributed as the test samples; thus in a single tree the miscalibration is simply due to noisy and small leaves, although the effects of aggregation at the forest level are less obvious.

\clearpage
% \vspace{-10pt}

% Cohens kappa
\begin{table}[!ht]
  \centering
  \begin{tabular}{|c | c | c | c | c|}
    \hline
    & RF & IRF & SigRF & UF \\
    \hline
	RF &  & -0.003 (0.000) & -0.003 (0.000) & -0.036 (0.000) \\
    IRF & 0.003 (1.000) & & 0.000 (0.038) & -0.030 (0.000) \\
    SigRF & 0.003 (1.000) & 0.000 (0.962) & & -0.028 (0.000) \\
    UF & 0.036 (1.000) & 0.030 (1.000) & 0.028 (1.000) & \\
    \hline
  \end{tabular}
  \caption{Median row-column difference in Cohen's kappa error across all datasets, with one-sided Wilcoxon Sign test p-value in parentheses. A cell with a negative value indicates that the method in that row outperformed the method in that column.}
  \label{fig:ck_wilcoxon}
\end{table}

\begin{figure}[!hb]
  \centering
  \includegraphics[width=0.64\linewidth]{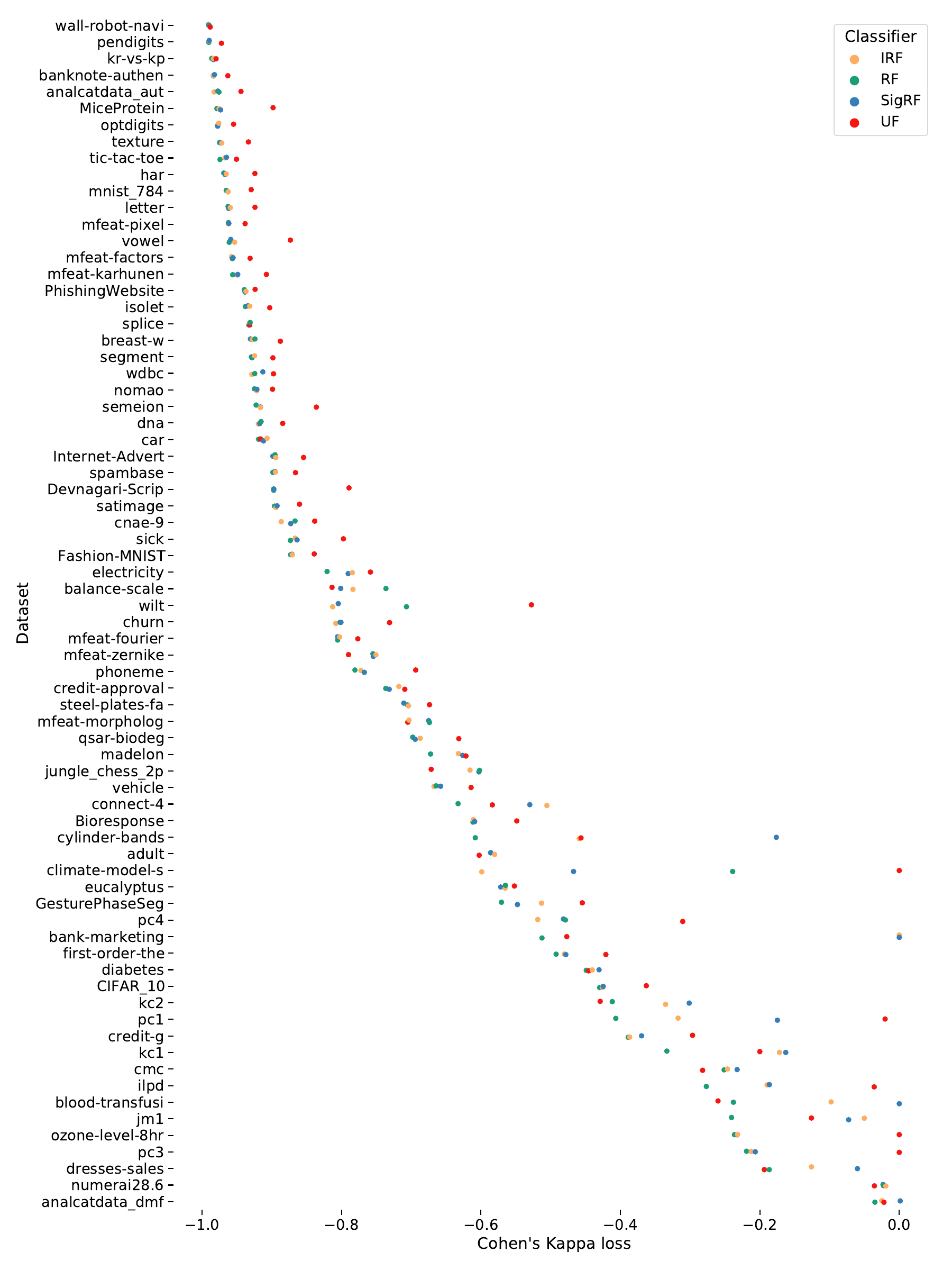}
  \caption{The Cohen's kappa loss for each of the decision forests in each of the 72 CC18 datasets, averaged over 5 cross-validation folds. A value of $-1$ means perfect accuracy whereas a value greater than or equal to $0$ means chance accuracy.}
  \label{fig:supp_ck_stripplot}
\end{figure}

% ECE
\newpage
\begin{table}[!ht]
  \centering
  \begin{tabular}{|c | c | c | c | c|}
    \hline
    & RF & IRF & SigRF & UF \\
    \hline
	RF & & 0.017 (1.000) & 0.004 (0.936) & 0.014 (1.000) \\
    IRF & -0.017 (0.000) & & -0.016 (0.000) & -0.007 (0.003) \\
    SigRF & -0.004 (0.064) & 0.016 (1.000) & & 0.002 (0.938) \\
    UF & -0.014 (0.000) & 0.007 (0.997) & -0.002 (0.062) & \\
    \hline
  \end{tabular}
  \caption{Median row-column difference in ECE across all datasets, with one-sided Wilcoxon Sign test p-value in parentheses. A cell with a negative value indicates that the method in that row outperformed the method in that column.}
  \label{fig:ece_wilcoxon}
\end{table}

\begin{figure}[!hb]
  \centering
  \includegraphics[width=0.66\linewidth]{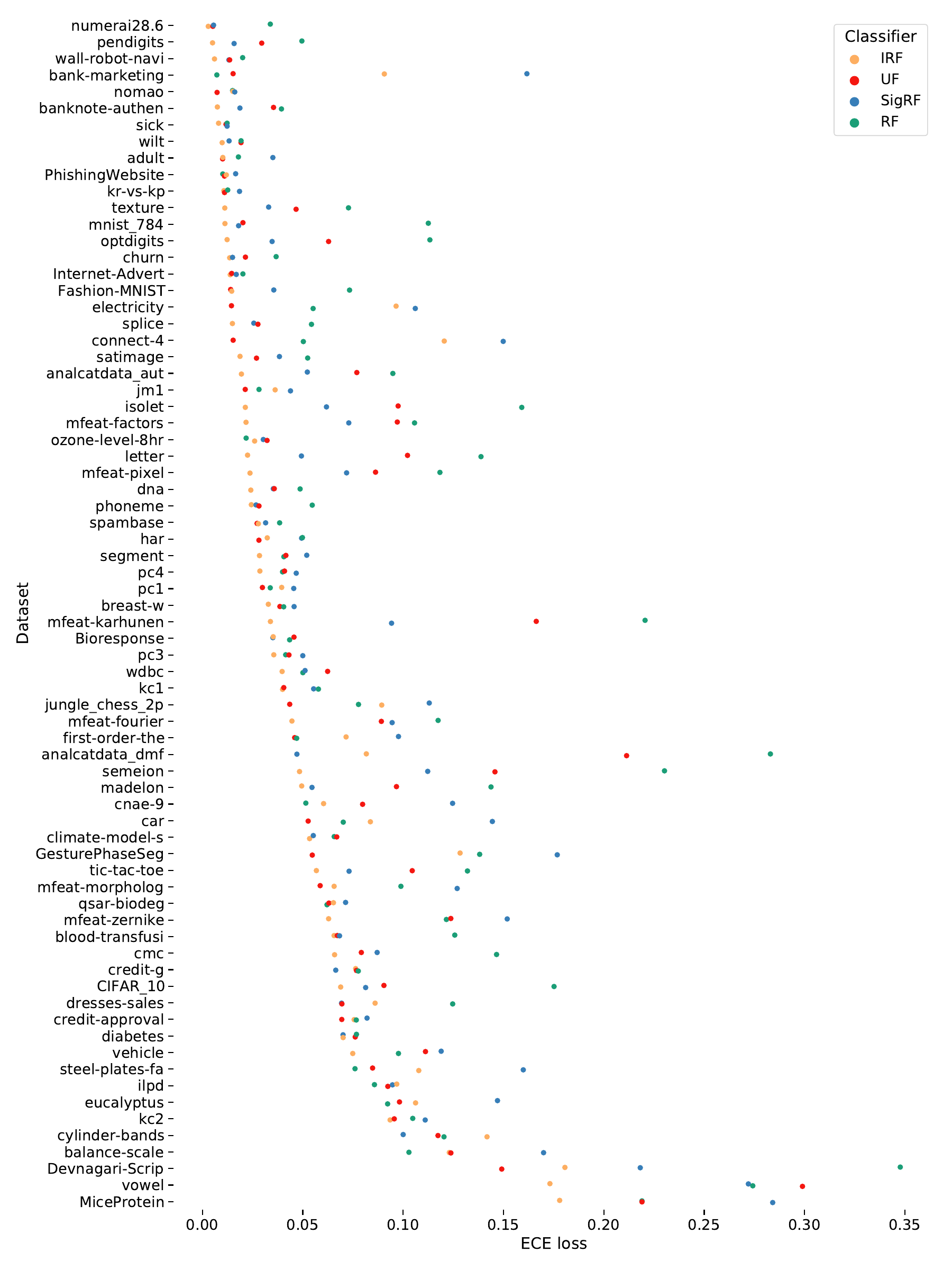}
  \caption{The ECE for each of the decision forests in each of the 72 CC18 datasets, averaged over 5 cross-validation folds. A value of $0$ means perfect calibration.}
  \label{fig:supp_ece_stripplot}
\end{figure}

% MCE
\newpage
\begin{table}[!ht]
  \centering
  \begin{tabular}{|c | c | c | c | c|}
    \hline
    & RF & IRF & SigRF & UF \\
    \hline
	RF & & 0.029 (0.997) & 0.007 (0.827) & 0.037 (0.974) \\
    IRF	& -0.029 (0.003) & & -0.010 (0.027) & -0.006 (0.111) \\
    SigRF & -0.007 (0.173) & 0.010 (0.973) & & 0.003 (0.742) \\
    UF & -0.037 (0.026) & 0.006 (0.889) & -0.003 (0.258) & \\
    \hline
  \end{tabular}
  \caption{Median row-column difference in MCE across all datasets, with one-sided Wilcoxon Sign test p-value in parentheses. A cell with a negative value indicates that the method in that row outperformed the method in that column.}
  \label{fig:mce_wilcoxon}
\end{table}

\begin{figure}[!hb]
  \centering
  \includegraphics[width=0.66\linewidth]{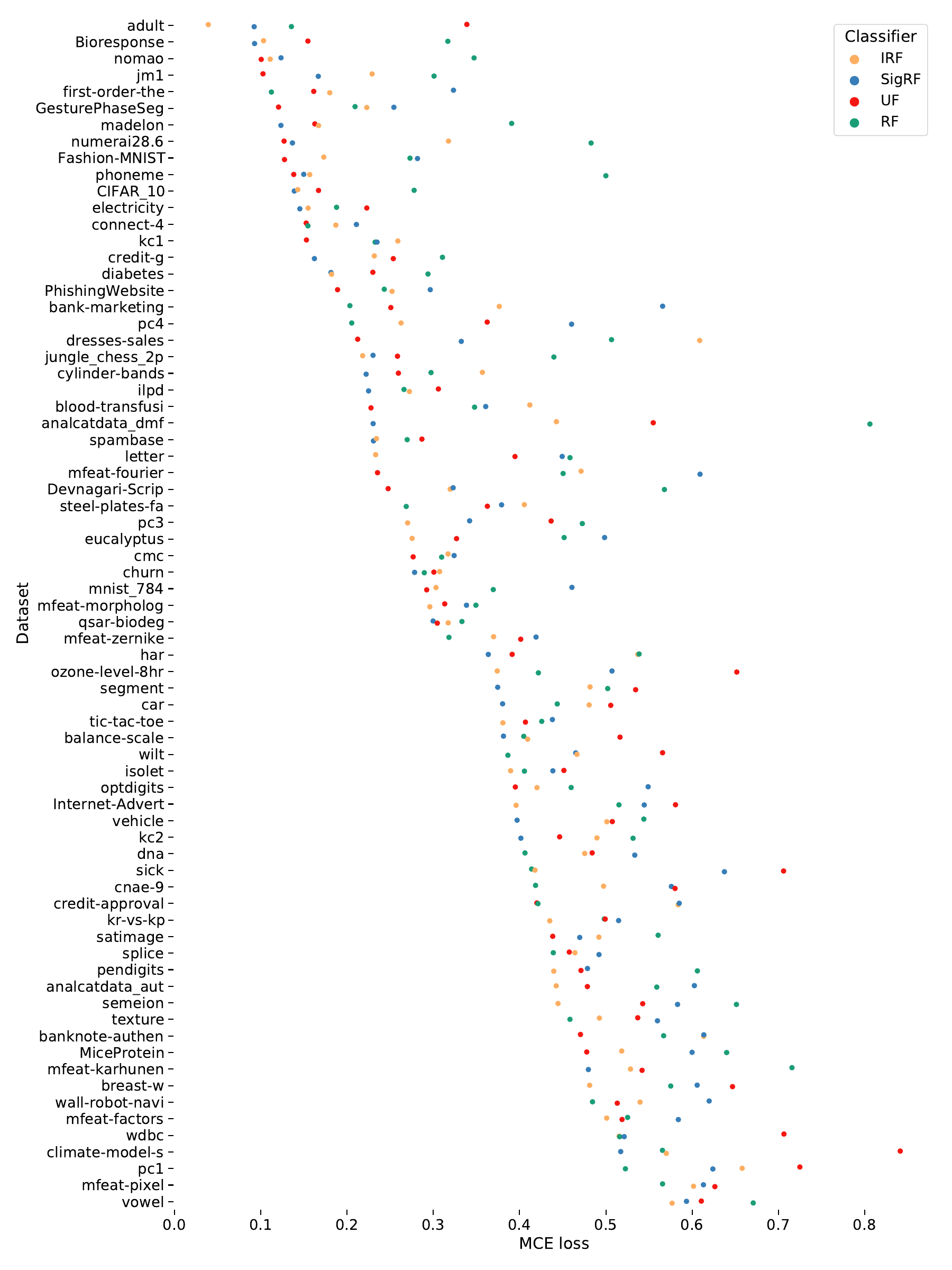}
  \caption{The MCE for each of the decision forests in each of the 72 CC18 datasets, averaged over 5 cross-validation folds. A value of $0$ means perfect calibration.}
  \label{fig:supp_mce_stripplot}
\end{figure}

\begin{figure}[!ht]
  \centering
  \includegraphics[width=\linewidth]{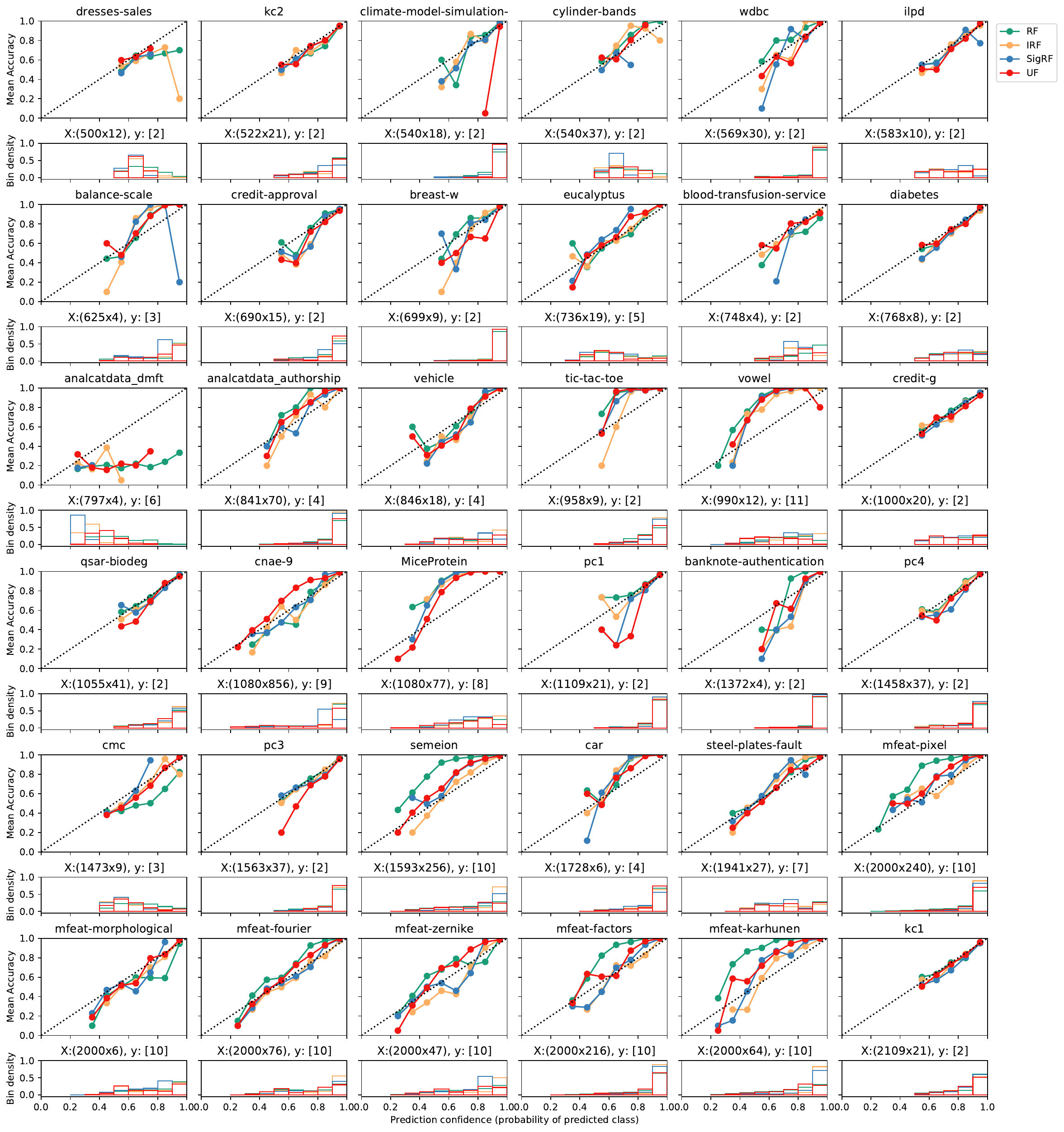}
  \caption{Each plot (one per the half of CC18 datasets) shows the average accuracy of samples in each of 10 equally spaced bins on $(0, 1]$ per the posterior of the predicted class. An ECE of 0 corresponds to all points occurring on the dashed line $y = x$. The proportion of samples in each bin is plotted beneath.}
  \label{fig:supp_ece_density_1}
\end{figure}

\begin{figure}[!hb]
  \centering
  \includegraphics[width=\linewidth]{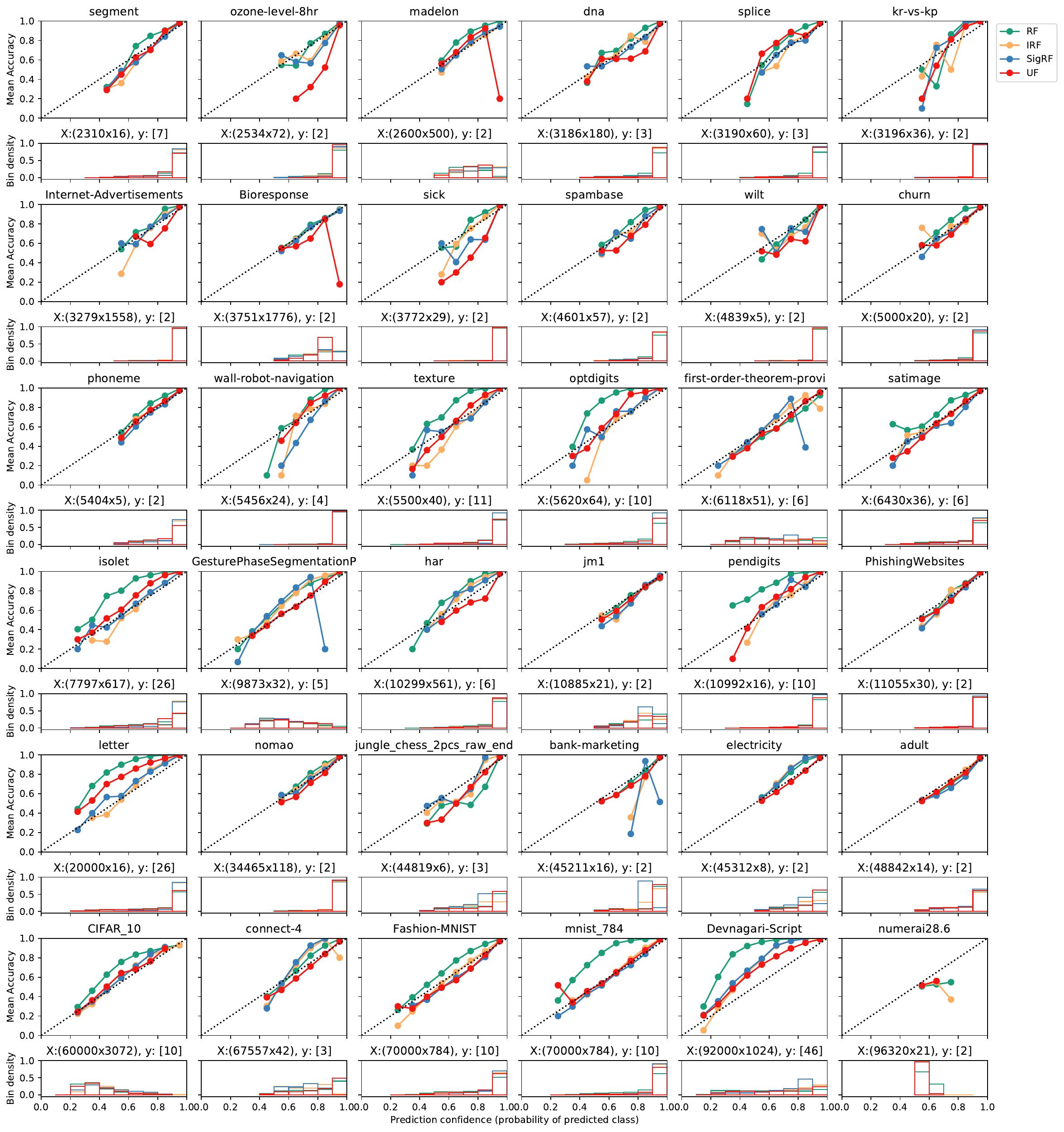}
  \caption{Each plot (one per the second half of CC18 datasets) shows the average accuracy of samples in each of 10 equally spaced bins on $(0, 1]$ per the posterior of the predicted class. An ECE of 0 corresponds to all points occurring on the dashed line $y = x$. The proportion of samples in each bin is plotted beneath.}
  \label{fig:supp_ece_density_2}
\end{figure}

\end{document}